\documentclass[11pt]{article}
\usepackage{style}
\usepackage[left=1in,bottom=1in,top=1in,right=1in]{geometry}
\hypersetup{%
	colorlinks   = true, %
	urlcolor     = blue, %
	linkcolor    = blue, %
	citecolor    = blue,  %
}

\title{ Globally Consistent Coloring Schemes for Language Identification}

\author{Moses Charikar\thanks{Stanford University. Email: \texttt{moses@cs.stanford.edu.}}
\and
Jon Kleinberg\thanks{Cornell University. Email: \texttt{kleinberg@cornell.edu.}}
\and
Chirag Pabbaraju\thanks{Stanford University. Email: \texttt{cpabbara@cs.stanford.edu.}}
}

\date{\today}

\usepackage[dvipsnames]{xcolor}

\begin{document}

\maketitle

\begin{abstract}
We study how little extra information is needed to make adversarial language learning possible. 
In Gold's model of language identification in the limit, a learner is given an enumeration of the strings from an unknown target language chosen from a countable list of candidate languages.
The learner guesses the identity of the language over the course of the enumeration, and it succeeds if, after some finite time, all of its guesses are the correct language. 
Classical results of Gold and Angluin show that many natural collections cannot be learned in this way. 
Recent work on {\em trace colorings}, motivated by the success of annotation and thinking-trace strategies in language learning, overcomes this obstruction by annotating every symbol of every observed string with a color. 
We ask whether the learner really needs this whole sequence of colors, or whether one color at the end of each string (a terminal coloring) is enough to facilitate language identification.

We show that just one terminal bit per string is enough for every countable collection of infinite languages. 
In fact, the colorings can be chosen collection-independently: there is a single assignment of a two-color terminal coloring to every infinite language such that the same preassigned colorings identify every countable subcollection. 
Thus, in this model, an entire color trace can be compressed to one bit attached to the end
of each example.

This optimal compression has a set-theoretic cost. 
Our global construction uses almost-disjoint families and a transfinite recursion, and we prove that this kind of strong nonconstructivity is unavoidable for any bounded number of colors.
As a notion of constructivity, we use the standard formalism of {\em Borel maps} (a regularity condition satisfied by natural explicit constructions); 
we show that no global terminal coloring with a finite number of colors defined by a Borel map
can identify all countable subcollections. 
By contrast, known trace-coloring constructions are Borel when encoded as terminal colorings, but require infinitely many terminal colors. 
Our lower bound uses the Galvin-Prikry theorem (an infinite-dimensional generalization of Ramsey's Theorem) to canonicalize any Borel coloring into a finite-prefix coloring, and then proves that finite-prefix colorings cannot support identification for all countable collections.
These results give a sharp tradeoff between where the annotation is placed, how many colors it uses, and how explicitly the coloring rule can be defined.
\end{abstract}

\newpage

\section{Introduction}

Dramatic advances in the power of large language models 
have rewakened interest in abstract models of language learning.
These models can be viewed as 
formalisms for language acquisition that operate at a level of abstraction
more general than any particular architecture or algorithm,
and in this way they seek principles that might exist outside
the specific assumptions of any architectural
or algorithmic decisions we have made in current systems.

One of the very earliest and most influential of these formalisms
is Gold's notion of {\em language identification in the limit}
\cite{gold1967language},
which presents language learning as a game played between 
an adversary and an algorithm.
The adversary thinks of a secret language $K$ that is known to
come from a countable collection of candidate languages 
$L_1, L_2, L_3, \ldots$;
the adversary presents the strings of $K$ to the algorithm one-by-one;
and in each step the algorithm tries to guess the identity of the language.
(As usual, a {\em language} is simply a set of finite strings,
and we will generally think of all the languages in the collection
as infinite sets unless noted otherwise.)
In particular, in each step $t$ the adversary presents a string $w_t$ from 
the secret language $K$, and 
the algorithm guesses an index $i_t$, with the goal that $L_{i_t} = K$.
The algorithm wins this game if there is some $t^*$ such that 
$L_{i_t} = K$ for all $t \geq t^*$, and if it does this, then
it has achieved {\em identification in the limit}.

Given that the model is based on worst-case assumptions about 
the adversary's choice of language $K$ and its enumeration of $K$,
the game is very hard for the algorithm to win, and 
Gold's original theorem showed that identification in the limit was
not possible in general even for language collections
$L_1, L_2, L_3, \ldots$ as simple as the regular languages
\cite{gold1967language},
Angluin subsequently characterized the very limited cases in which the 
algorithm can succeed at this task
\cite{angluin1980inductive}.

\paragraph{Trace colorings.}
Because language learning appears to be more tractable in practice
than the bleak view suggested by Gold's negative result, 
attention turned to strengthenings
of the underlying assumptions that could produce formalisms 
reflecting this tractability.
Arguably the most widespread of these strengthenings was the
introduction of probabilistic assumptions: the language choice may be
adversarial, but the enumeration of the examples from the language
comes from an underlying distribution that therefore constrains
the adversary.

A recent line of work, in contrast, has explored a set of assumptions
that enhance the tractability of the model while maintaining its
core worst-case style.
This line of work is based on the observation that in practice,
training data for language learning often comes {\em annotated}
with additional meta-data or trace data that helps further explain 
the training data's membership in its underlying language
\cite{bhattamishra2026automata,papazov25learning,peng2026language}.
For example, training datasets consisting of computer code often
comes with comments in the code; training datasets consisting of
solutions to math problems often contain the written step-by-step
reasoning of expanded solution sets.
In such cases, the expanded meta-data or annotations have 
proved very helpful for training.
Indeed, guided in part by this insight, 
recent developments in large language models have greatly extended
the set of annotations via chain-of-thought mechanisms that explicitly
add thinking tokens to the output of the model
\cite{wei2022chain}.

All of these annotations have been viewed, 
in this recent line of theoretical work,
as {\em computational traces}: extra information that accompanies
the strings of the language as they are presented to the learner \cite{bhattamishra2026automata,papazov25learning,peng2026language}.
The idea in the theoretical models arising from this research
is the following:
we attach an annotation symbol to each symbol of each string,
and we present this to the algorithm as part of the enumeration
of strings in a language learning problem.
In this way, the language learning works with these 
``annotated strings'' --- where each symbol has an annotation added to it ---
rather than the raw strings themselves.

In the work of Peng et al.~(2026), these traces come from an underlying
machine model, in that we assume the language being learned is
the set of inputs recognized by a state machine (a finite automaton,
a pushdown automaton, or a Turing machine), and the annotating trace is
the state sequence of the accepting path
\cite{peng2026language}.
In the work of Bhattamishra et al.~(2026), the traces are more expansive, 
drawn from the power-set of the vocabulary
\cite{bhattamishra2026automata}.
More recently, Charikar et al.~(2026) abstracted these types of computational
traces away from explicit machine models and into the notion of
{\em trace colorings}: they consider any labeling that assigns
a color to each symbol of each string in the language,
representing an arbitrary step-by-step annotation of the string
\cite{charikar26language}.
The idea in each of these papers is that when the learner is
presented with the annotated version of the string, it
can solve the problem of identification in the limit for some class
of languages.
Charikar et al.~(2026) showed that there is a trace coloring scheme
using a finite set of colors 
for which a learner can solve identification in the limit for
{\em every} countable collection of candidate languages
\cite{charikar26language}.

In terms of the resource requirements of the trace coloring scheme
for annotation, 
one of the key goals is to reduce the number of
colors required, so that the annotation can be as compact as possible
per symbol.
Viewed in these terms, 
the main result of Charikar et al.~(2026) is that there is a trace coloring
using $k+1$ colors, where $k$ is the size of the underlying alphabet.
They also show that for the special case of regular languages, 
they can improve the resource requirements dramatically, via
a trace coloring that uses only two colors.

These results leave open several basic questions.
First, the bound of $k+1$ is strong in one sense, since it
is only linear in the alphabet size, but it raises the natural question
of whether the number of colors in a trace coloring that
supports identification in the limit actually needs
to grow with the alphabet size at all.
Second, a generally
unstated resource constraint for trace coloring is that a color
is assigned to {\em each} symbol of the string --- that is,
the annotation works symbol-by-symbol --- but in principle
we might be able to get away with annotating many fewer of the 
symbols in the string.
The most extreme version of this would be a {\em terminal coloring},
which only provides a single annotation at the end of the entire 
string, and yet still supports identification in the limit.\footnote{As we discuss further in a later section, we can always convert a trace coloring
using $\ell$ colors per symbol into a terminal coloring by taking the
colors applied to each symbol of a string of length $n$ and 
instead writing it as a single $n$-tuple of colors that annotates 
only the end of the string.
This is a useful general conversion, but of course it does not fulfill the goal of using a fixed number
of colors, since such an encoding requires an unbounded number of colors that
grows with the length of the string: there are $\ell^n$
possible $n$-tuples of colors for a string of length $n$.}

\paragraph{The present work: Qualitatively stronger resource bounds.}
In this paper, we show that dramatically stronger resource bounds
are achievable for trace colorings: for any countable collection of languages
(where as before we assume all languages are infinite),
there is a terminal coloring that uses only two colors 
and supports identification in the limit.
This is best possible in both dimensions discussed above:
it uses only two colors, and (as a terminal coloring) it 
provides only one annotation per string.

Essentially, what the result shows is that there is a powerful 
kind of positive, ``non-uniform'' counterpart to Gold's classical
impossibility theorem for identification in the limit:
if we are allowed to annotate each string in each language with
a single one-bit annotation, then identification in the limit
becomes possible for every countable collection of infinite languages.\footnote{It is
useful to note what is possible for language collections where
some of the languages in the collection might be finite sets.
What we can do in this case is to apply the terminal coloring 
to all the infinite languages in the collection, and use a distinct third
color to annotate all the strings in all the finite languages.
This means that as soon as the third color is seen, the algorithm
knows it is working with a finite set, and it can use a standard
algorithm to identify this language in the limit (in brief, by
always guessing that the language consists of exactly the strings
it has seen so far).  In this way, a corollary of our main result
is that when the collection can include languages of finite size,
there is a terminal coloring using only three colors that
supports identification in the limit.}

We develop this result in several steps.
First, we observe that there is a relatively direct construction of
a terminal coloring with two colors if the colors assigned to 
a string $w \in L_i$ can depend not just on the language $L_i$
is belongs to but also the {\em other} languages $L_j$ in the collection.

\begin{theorem}[Collection-dependent 2-coloring, see \Cref{prop:2-coloring-collection-dependent}]
    \label{thm:collection-dependent-intro}
    Let $\mcC=\{L_1,L_2,\dots\}$ be any countable language collection, where every $|L_i|=\infty$. There exists a set of 2-coloring functions $\{c_{L}\}_{L \in \mcC}$ that makes $\mcC$ identifiable in the limit.
\end{theorem}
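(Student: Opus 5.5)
The plan is to make the learner eliminate every wrong candidate after seeing finitely many examples, and then guess the least index that is still consistent with the data. Suppose the target is $K=L_i$ and $L_j\neq L_i$ is some other candidate. There are two ways $L_j$ can be ruled out. If $L_i\not\subseteq L_j$, a string of $L_i\setminus L_j$ will eventually appear in the enumeration and rule out $L_j$ without any help from the colors. The only problematic case is therefore $L_i\subsetneq L_j$. In that case it suffices that $c_{L_i}$ and $c_{L_j}$ disagree on at least one string $w\in L_i$: the adversary must eventually present $w$, its observed color is $c_{L_i}(w)$, and that contradicts $L_j$. So the whole construction reduces to arranging one disagreement witness inside $L_i$ for every ordered pair with $L_i\subsetneq L_j$. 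Since the coloring may depend on the whole collection, this can be done by a simple diagonal bookkeeping argument.

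\textbf{Construction.} Assume without loss of generality that the sets $L_i$ are distinct. If there are repetitions, assign one coloring per distinct set; all indices of that set then share the coloring. Enumerate all ordered pairs $(i,j)$ with $L_i\subsetneq L_j$ as $p_1,p_2,\dots$, and build the colorings in stages, initially with every $c_L$ undefined everywhere.

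At stage $s$, with $p_s=(i,j)$, only finitely many strings have had a color fixed in any language so far. Since $|L_i|=\infty$, we can choose a string $w_s\in L_i$ that has not yet been colored in $L_i$ or in $L_j$. Note that $w_s\in L_j$ as well, since $L_i\subseteq L_j$. Set $c_{L_i}(w_s)=0$ and $c_{L_j}(w_s)=1$. After all stages, color every remaining pair $(L,w)$ with $w\in L$ by $0$.

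Each $c_L$ is well defined, because we never recolor a string already colored in that language. Each pair $p_s$ receives its witness $w_s\in L_i$ on which $c_{L_i}(w_s)\neq c_{L_j}(w_s)$.

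\textbf{Learner and correctness.} At time $t$ the learner has seen colored strings $(w_1,b_1),\dots,(w_t,b_t)$. It calls $L_j$ \emph{consistent} if every $w_r\in L_j$ and every $c_{L_j}(w_r)=b_r$, and it outputs the least consistent index.

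Let $K=L_{i^*}$ with $i^*$ minimal. The target $L_{i^*}$ is consistent at all times. Each of the finitely many $j<i^*$ has $L_j\neq K$, so by the case analysis above it becomes inconsistent at some finite time and stays inconsistent afterwards. After the maximum of these finitely many times, the learner outputs $i^*$ forever, which is identification in the limit.

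The only step needing care is the existence of a fresh witness at each stage. This is exactly where the hypothesis that every language is infinite is used: finitely many colorings have been fixed, and $L_i$ is infinite. Everything else in the argument is routine.
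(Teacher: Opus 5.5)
Your proof is correct and uses essentially the paper's construction: enumerate the properly nested pairs, pick a fresh witness in the smaller language, and give it different colors in the two languages. This establishes the distinguishable coloring condition. The paper keeps witnesses globally fresh by strictly increasing string length and then cites the characterization theorem for identifiability, whereas you only need freshness within the two languages involved and verify identifiability directly with a least-consistent-index learner; both of these differences are immaterial.
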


The above is a useful stepping-stone to our main result, but
it is a much more limited type of a terminal coloring than what we will eventually be seeking, since the colors assigned to the strings in each language $L$ in this first theorem are defined with knowledge of the collection $\mcC$ that $L$ belongs to.
We therefore think of this first result as a {\em collection-dependent} terminal coloring, since the coloring is defined with reference to the collection of languages.
What we will obtain in the next stronger result is a 
{\em collection-independent} way to 
annotate the strings in each language $L$ --- using only knowledge of the single language $L$ --- 
such that when we subsequently assemble a countable collection of languages,
the annotations of each language separately allow for 
identification in the limit.
In other words, this collection-independent coloring is an annotation at the level of the language,
not at the level of the collection of languages.
This stronger definition accords with the intuition from practice, where
annotations are generally providing assistance with the language
in question, not with the contrast to infinite sets of other 
counterfactual languages that are not the language in question.

The challenge we face in going from collection-dependent coloring of the first theorem to
the collection-independent coloring (also with two colors)
turns out to be fundamentally set-theoretic in nature.
We are considering all possible infinite languages --- i.e.,
all infinite subsets of a countable ground set
(which we can take without loss of generality to be the natural numbers) ---
and we want to 2-color the elements of each in such a way that
when we pull together any countable collection of these infinite sets,
it is possible to achieve identification in the limit.
We show that for this purpose, it is sufficient for the colorings
of different subsets to satisfy a set of interdependent requirements 
that we define in later sections; and we use these requirements as part of a
transfinite recursion that iterates through the uncountably many 
infinite languages and builds the 2-coloring of each language with reference
to the 2-colorings that have been inductively defined for all
earlier languages.

\begin{theorem}[Collection-independent 2-coloring, see \Cref{thm:2-coloring-collection-independent}]
    \label{thm:collection-independent-intro}
    Let $\mcC$ be the collection of all infinite subsets of the universe. There exists a set of terminal 2-coloring functions $\{c_{L}\}_{L \in \mcC}$, such that any countable subcollection $\mcC' \subseteq \mcC$ is identifiable in the limit with terminal colors given by $\{c_{L}\}_{L \in \mcC'}$.
\end{theorem}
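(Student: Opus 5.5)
The plan is to reduce identification to a single pairwise condition on the colorings, and then build colorings satisfying that condition by transfinite recursion over all infinite subsets of the universe (identified with $\mathbb{N}$). The condition is
\[
(\star)\qquad \text{for all infinite } L \subsetneq L' :\quad c_{L'}|_{L} \neq c_{L}.
\]
In words, whenever one language is a proper subset of another, the larger language's coloring must disagree with the smaller language's own coloring at some string of the smaller language.

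\textbf{Step 1: $(\star)$ suffices for identification.} Fix a countable subcollection $\mathcal{C}'=\{L_1,L_2,\dots\}$ and target $K=L_j$, where $j$ is the least index of $K$. At time $t$ the learner has seen pairs $(w_s,c_K(w_s))$ for $s\le t$. It outputs the least $i$ such that $L_i$ is consistent with the data, meaning $w_s\in L_i$ and $c_{L_i}(w_s)=c_K(w_s)$ for all $s\le t$. The index $j$ is always consistent. Now take any $i<j$; then $L_i\neq K$.
\begin{itemize}
\item If $K\not\subseteq L_i$, some enumerated string lies outside $L_i$, and from then on $i$ is refuted.
\item If $K\subsetneq L_i$, then $(\star)$ gives a string $w\in K$ with $c_{L_i}(w)\neq c_K(w)$. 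Once $w$ appears, $i$ is refuted.
\end{itemize}
Only finitely many indices $i<j$ exist, so after a finite time the learner always outputs $j$.

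\textbf{Step 2: transfinite recursion.} Using the axiom of choice, enumerate all infinite subsets of $\mathbb{N}$ as $\{X_\alpha:\alpha<\mathfrak{c}\}$, where $\mathfrak{c}$ is the initial ordinal of size continuum, so every $\alpha<\mathfrak{c}$ has $|\alpha|<\mathfrak{c}$. We define $c_{X_\alpha}$ at stage $\alpha$ while maintaining the invariant that every coloring built has infinitely many $1$'s. At stage $\alpha$, fix an almost-disjoint family $\{A_x : x\in 2^{\mathbb{N}}\}$ of infinite subsets of $X_\alpha$; for instance, transport the family of branches of the binary tree via a bijection $X_\alpha\to 2^{<\mathbb{N}}$. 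The candidate colorings are the indicator functions $\chi_{A_x}$ on $X_\alpha$. Each candidate has infinitely many $1$'s, so the invariant is preserved.

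\textbf{Step 3: each earlier stage forbids at most one candidate.} For each $\beta<\alpha$ there are two possible constraints.
\begin{itemize}
\item If $X_\alpha\subsetneq X_\beta$, we need $c_{X_\alpha}\neq c_{X_\beta}|_{X_\alpha}$. This forbids exactly one function, hence at most one candidate.
\item If $X_\beta\subsetneq X_\alpha$, we need $\chi_{A_x}|_{X_\beta}\neq c_{X_\beta}$. Let $S=c_{X_\beta}^{-1}(1)$, which is infinite by the invariant. Any forbidden $x$ satisfies $A_x\cap X_\beta=S$, so $A_x\supseteq S$. If two distinct $x,y$ were forbidden, then $A_x\cap A_y\supseteq S$ would be infinite, contradicting almost-disjointness. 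So again at most one candidate is forbidden.
\end{itemize}
Altogether at most $|\alpha|<\mathfrak{c}$ candidates are forbidden, out of $\mathfrak{c}$ many. We pick a surviving candidate as $c_{X_\alpha}$.

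Every pair $L\subsetneq L'$ is handled at the later of its two stages, so $(\star)$ holds globally, and Step 1 then gives the theorem.

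\textbf{Main obstacle.} The crux is the second case of Step 3. A single constraint "the restriction to $X_\beta$ equals a given function" could a priori kill many candidates. This is exactly why we need both the almost-disjoint family and the invariant that every coloring has infinitely many $1$'s: together they pin down at most one offending candidate. Without the invariant, a coloring of $X_\beta$ with only finitely many $1$'s would kill every $A_x$ that is nearly disjoint from $X_\beta$, possibly $\mathfrak{c}$ many.
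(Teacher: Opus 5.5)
Your proof is correct and takes essentially the same approach as the paper. The paper also runs a transfinite recursion along the initial ordinal of size continuum, uses almost-disjoint families obtained from branches of the binary tree, and shows that each earlier stage forbids at most one candidate, using the same two-case split and the same reliance on the infinitude of earlier red sets. The one difference is that you check directly, with the least-consistent-index learner, that the distinguishable coloring condition gives identification, whereas the paper cites the tell-tale characterization from prior work.
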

Note crucially how the construction of the coloring functions above is collection-independent: the terminal coloring used by any given language $L$ is the same, regardless of the countable collection $\mcC'$ it is considered within. This is in contrast to the coloring functions from \Cref{thm:collection-dependent-intro}, where the terminal coloring used by a language $L$ depends on the collection containing it.

\paragraph{The inherently non-constructive nature of bounded terminal colorings.}
This is of course an extremely non-constructive way to define the
terminal coloring, and a basic question that arises is whether
this non-constructive aspect is necessary in every terminal 2-coloring ---
or more generally, in any {\em bounded terminal coloring}, where
we are constrained to use a fixed number of colors.
We show that it is, in a strong sense.
First we need make precise what we mean by ``constructive'' versus
``non-constructive'' when the object being constructed --- a 2-coloring
of every infinite subset of the natural numbers --- is uncountable
in cardinality to begin with.

Fortunately, there is a standard notion of constructiveness in this setting,
which is based on the concept of a {\em Borel map}.
A terminal $k$-coloring is simply a map $J$ that sends infinite
subsets $L$ of the natural numbers to infinite strings over the
finite alphabet $\{1, 2, 3, ..., k\}$ (specifying the color assigned
to each element of the set $L$).
The standard topology on the domain $D$ (all infinite subsets of the naturals), 
and the standard topology on the range $R$ (all infinite strings over
$\{1, 2, 3, ..., k\}$), both define the basic open sets as those sets 
where membership can be determined from the values on a finite prefix.
The Borel sets are then all sets that can be formed from the basic open sets
by iterated taking of unions, intersections, and complements
(in other words, the $\sigma$-algebra generated by the basic open sets).
Just as a map is {\em continuous} if $J^{-1}(Y)$ 
is open for every open set $Y$, 
a map is {\em Borel} if $J^{-1}(Y)$ is Borel for every Borel set $Y$.
Finally, given that a $k$-coloring is just a map from the domain to
the range as we have defined them, we say that a coloring is Borel if the map
specifying it is a Borel map.
Experience shows that
essentially any function that can be defined using a ``simple'' 
or ``closed-form'' iterative construction is Borel, 
and all the colorings in previous work in this area are Borel.

What we show is that for all finite $k$, if a terminal $k$-coloring 
supports identification in the limit, then it cannot be a Borel coloring.
We do this via an infinitary Ramsey-theoretic argument,
using the Galvin-Prikry Theorem \cite{galvinprikry}.

\begin{theorem}[No collection-independent finite Borel coloring, see \Cref{thm:borel-lb-identification}]
    \label{thm:borel-lb-identification-intro}
    Let $\mcC$ be the collection of all infinite subsets of the universe, and let $\{c^J_L\}_{L \in \mcC}$ be terminal coloring functions specified by a Borel map $J$ that uses finitely many colors. There exists a countable subcollection $\mcC' \subseteq \mcC$ such that $\mcC'$ is not identifiable in the limit with terminal colors given by $\{c^J_L\}_{L \in \mcC'}$.
\end{theorem}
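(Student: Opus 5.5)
The plan has two stages: first use the Galvin--Prikry theorem to show that $J$, restricted to the subsets of some infinite $H$, is a \emph{finite-prefix} coloring; then show that no finite-prefix coloring can identify a suitable Gold-type countable subcollection of subsets of $H$. Throughout, write $[A]^{\omega}$ for the infinite subsets of $A$, and identify $L\in[\mathbb N]^{\omega}$ with its increasing enumeration.

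\emph{Step 1 (canonicalization).} For a finite set $s$ and $L\supseteq s$ with $L\cap[0,\max s]=s$, the color $c^J_L(\max s)$ is a coordinate of $J(L)$. Since $J$ is Borel, the set of tails $T=L\setminus s$ on which this coordinate takes a given color is a Borel subset of $[(\max s,\infty)]^{\omega}$. Galvin--Prikry then gives, inside any infinite set, an infinite subset on which this color is constant. I will run a fusion argument. Build $h_1<h_2<\cdots$ together with a decreasing sequence of infinite reservoirs $A_1\supseteq A_2\supseteq\cdots$. At stage $n$ there are only finitely many $s\subseteq\{h_1,\dots,h_n\}$ with $\max s=h_n$. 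Apply Galvin--Prikry successively to each of them, shrinking the reservoir each time, and then take $h_{n+1}=\min A_{n+1}$. With $H=\{h_1,h_2,\dots\}$, every $L\in[H]^{\omega}$ and $x\in L$ satisfy
\[
c^J_L(x)=\phi(L\cap[0,x])
\]
for a fixed function $\phi$ on finite subsets of $H$. This is the finite-prefix form. Checking the fusion details (tails of $L$ lie in the final reservoirs, and the Borel property survives restriction to a closed subspace) is routine.

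\emph{Step 2 (further homogenization, optional).} Apply finite Ramsey's theorem to $\phi$ restricted to $n$-element sets, for each $n$, again diagonalizing. After passing to a further infinite $H$, the color then depends only on the rank of $x$ in $L$: $c^J_L(x)=f(|L\cap[0,x]|)$ for some $f:\mathbb N\to[k]$. If Step 3 can be done with prefix-dependence alone, I will skip this step.

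\emph{Step 3 (defeating finite-prefix colorings).} Choose a countable Gold-style family of subsets of $H$: a limit language $H$ and languages $L_n$ that agree with $H$ on the initial segment $\{h_1,\dots,h_n\}$. By prefix-dependence, the colored versions of $L_n$ and $H$ then coincide on that initial segment. Use a locking-sequence argument. A learner that identifies $H$ has a locking sequence $\sigma$, which uses elements up to some $h_m$ with their $H$-colors. For $n>m$, $\sigma$ is also a legal colored sequence from $L_n$. Extending $\sigma$ by a text of $L_n$ should force the learner to stay on $H$ forever, contradicting identification of $L_n$.

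The main obstacle is exactly this step. Once $L_n$ deviates from $H$, prefix-dependence lets the colors of \emph{later} elements of $L_n$ differ from their colors in $H$. The learner can therefore detect the deviation from later colored examples, and the naive locking argument breaks. To fix this, I would first try to design the deviations so that colors are preserved. Using the rank-only form $f$ from Step 2, pick the $L_n$ so that ranks, and hence colors, of the common elements are unchanged. For example, take $L_n$ to agree with $H$ except on a set of positions where $f$ takes a value $c$ that $f$ attains infinitely often, replacing elements so that every shared element keeps its rank. If that is too rigid, I would instead build the collection and a single diagonal text simultaneously, Gold-style. In that construction, each switch from $H$ to $L_n$ happens beyond all previously shown elements, so the colors already shown remain correct in the final target language.
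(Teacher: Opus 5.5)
You have a genuine gap, and it sits in Steps~2--3, where the real difficulty lies. Step~1 is sound and is essentially the paper's Canonicalization Lemma (\Cref{lemma:canonicalization}); working inside $H$ rather than transferring to $\N$ via $\pi$ makes no difference.

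Step~2 is false in general, and it would need infinite rather than finite Ramsey anyway. Diagonalizing Ramsey over all $n$ cannot make the color depend only on rank. Take $J(L)_i=1$ if $i+1\ge\min L$ and $0$ otherwise. This map is continuous, hence Borel, and already finite-prefix with $\phi(s)=1$ iff $|s|\ge\min s$. For any infinite $H$ with $m=\min H$ and any $n\ge\max(m,1)$, the $n$-subsets of $H$ containing $m$ get color $1$, while those inside $H_{>n}$ get color $0$. So the Step~3 fix that relies on a rank-only $f$ has nothing to stand on. Even granting Step~2, in a proper subset $L_n\subsetneq H$ every element past the first deletion strictly drops in rank, so what you need is color preservation, not rank preservation.

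Your fallback of building the collection and a diagonal text simultaneously does not help either. First, $\mcC'$ must be fixed before the learner is chosen. Second, by \Cref{thm:characterization-identification-with-terminal-traces}, non-identifiability is \emph{equivalent} to the following: some $L\in\mcC'$ such that every finite $T\subseteq L$ lies in some $L'\in\mcC'$ with $L'\subsetneq L$ whose colors agree with those of $L$ on all of $L'$, infinite tail included. A diagonal text cannot dodge the obstacle you yourself identified, namely that after a deviation the later elements of $L'$ may be colored differently than in $L$.

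The missing idea is a way to produce such color-matched proper subsets for an arbitrary finite-prefix $f$. The paper does this by induction on coherence (\Cref{lemma:induction-coherence-2}). Call a finite $G$ $t$-coherent if, for each $u>\max G$, the colors $f(G\cup S)$ over finite $S\subseteq\N_{>\max G}$ with $\max S=u$ lie in a set of at most $t$ colors.

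Starting from $H_0=G$, build $H_0\subsetneq H_1\subsetneq\cdots$ and approximations $A_i[s]\subseteq H_s$ with $A_s[s]=H_s$. At step $s$, extend the focused $A_{d(s)}[s]$ by one element $u$, and extend $H_s$ by a block $S$ with $\max S=u$ and $f(A_{d(s)}[s]\cup\{u\})=f(H_s\cup S)$; every index is focused on infinitely often. Two cases arise:
\begin{itemize}
\item If this never gets stuck, $B=\bigcup_s H_s$ and $A_i=\bigcup_{s\ge i}A_i[s]$ are exactly the required configuration.
\item If it gets stuck at $H_s$, then $f(H_s\cup S)$ always avoids $f(A_{d(s)}[s]\cup\{u\})$. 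Both colors lie in the same at-most-$t$ set determined by $u$, so $H_s$ is $(t-1)$-coherent.
\end{itemize}
Since every finite set is $k$-coherent, iterating reaches a $1$-coherent $G$. There colors depend only on the element itself, so $B=G\cup D$ and $A_i=G\cup(D\setminus\{z_i\})$ trivially preserve colors. Some argument of this kind is needed to complete your Step~3.
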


This is the deep sense in which our main result is a
highly non-uniform positive counterpart to Gold's Theorem:
there is a way of annotating every string in every possible 
infinite language with a single bit (i.e. one of two colors) such that
when any countable collection is assembled from these annotated languages,
identification in the limit is possible.
But every way of performing such an annotation is necessarily
highly non-constructive, in that the annotation cannot be accomplished
with a Borel map.
This also forms a contrast with the earlier result of Charikar et al.~(2026), 
which requires trace (rather than terminal) colorings and an 
unbounded set of colors (growing with alphabet size): in these
quantitative respects, theirs is a weaker result, 
but it is also constructive in a way that our
terminal 2-coloring cannot be.

\begin{table}[t]
\centering
    \[
    \begin{array}{|c|c|c|}
    \hline 
     & \text{Collection-dependent} & \text{Collection-independent} \\
    \hline
    \text{Constructive (Borel)}
    &
    \begin{array}{c}
    2 \text{ colors }\\
    \text{(\Cref{prop:2-coloring-collection-dependent})}
    \end{array}
    &
    \begin{array}{c}
    {\rm infinite ~ color ~ set} \ \text{(\Cref{thm:borel-map-infinite-palette})}\\
    \text{(provably necessary due to \Cref{thm:borel-lb-identification})}
    \end{array}
    \\
    \hline
    \text{Non-Borel}
    &
    \begin{array}{c}
    2 ~ {\rm colors}\\
    \text{(implied by \Cref{thm:2-coloring-collection-independent})}
    \end{array}
    &
    \begin{array}{c}
    2 ~ {\rm colors}\\
    \text{(\Cref{thm:2-coloring-collection-independent})}
    \end{array}
    \\
    \hline
    \end{array}
    \]
    \caption{Summary of our results. The entries indicate the number of colors required by terminal coloring functions for language identification.}
    \label{table:results}
\end{table}

\section{Overview of Techniques}
\label{sec:overview}

In this section, we give detailed proof sketches for all our main results.

\subsection{Upper Bounds}
\label{sec:upper-bounds-sketch}

\paragraph{\Locall 2-coloring functions.} We start by describing a simple 2-coloring scheme that is collection-dependent, and satisfies the distinguishable coloring condition. Let $\mcC$ be any countable language collection, and suppose all the languages in $\mcC$ are infinite. Our strategy is to enumerate all the pairs in $\mcC$ one by one, and build the terminal coloring functions as we process them. Namely, whenever we encounter a pair $(L,L')$ for which $L \subsetneq L'$, we designate a special ``sentinel'' string $x\in L$ which will be colored red in $L$ (i.e.., $c_L(x)=\Red$) but blue in $L'$ (i.e., $c_{L'}(x) = \Blue$). Furthermore, this sentinel string will be chosen to be \textit{longer} than any sentinel string chosen so far; such a choice is possible since all languages are infinite, and ensures uniqueness of sentinel strings. Finally, for every language, all the strings that never get chosen as sentinels in this (infinite) process are colored blue. We can readily see that the terminal coloring functions thus constructed satisfy the distinguishable coloring condition; indeed, for any $L,L' \in \mcC$ where $L \subsetneq L'$, we will have processed this pair at some finite time in the enumeration above, and hence designated some sentinel $x \in L$ to be colored red in $L$ but blue in $L'$.

\paragraph{\Globall 2-coloring functions.} Notice that the 2-coloring functions constructed above are heavily collection-dependent. A string $x \in L$ may be colored red if it gets chosen as a sentinel within a collection $\mcC$, but may be colored blue within a different collection $\mcC'$. We now describe how it is possible to obtain global 2-coloring functions for languages, such that the distinguishable coloring condition is satisfied with the same coloring functions \textit{irrespective} of the collection in question. Notice that in order for this, it suffices to construct 2-coloring functions for the collection that comprises of \textit{all} possible languages (i.e., all infinite subsets of the universe), since such a construction guarantees that, for every language $L$, and every possible proper superset $L' \supsetneq L$, there is a string $x \in L$ that is colored differently in $L$ and $L'$. However, the collection of all infinite subsets of the universe is \textit{uncountable}, which poses challenges in terms of building the coloring functions by enumerating pairs of languages as in the above. Therefore, we have to appeal to a highly non-constructive tool: transfinite recursion.

For simplicity, and without loss of generality, assume that the universe is the natural numbers $\N$, and let $\mcC$ be the set of all infinite subsets of $\N$. Even if $\mcC$ is uncountable --- i.e., has cardinality $\mfc$, where $\mfc$ is the cardinality of the continuum ---, by the axiom of choice, it is well-ordered, which means, by the well-ordering theorem, that it is order-isomorphic to a unique ordinal having cardinal $\mfc$. So, consider the \textit{smallest} ordinal $\kappa$ having cardinality $\mfc$. For such a $\kappa$, it holds that $|\alpha| < |\kappa|=\mfc$ for every $\alpha < \kappa$; we will crucially exploit this.

Since $\mcC$ and $\kappa$ have the same cardinality, let us put the ordinals $\alpha$ smaller than $\kappa$ in bijection with the languages in $\mcC$, i.e., $\mcC = \{L_\alpha: \alpha < \kappa\}$. Since $|\alpha| < \mfc$ for every $\alpha < \kappa$, 
our strategy will be to associate an \textit{uncountable} family $\mcM_\alpha$ of infinite subsets of $L_\alpha$ with every $L_\alpha$. The family $\mcM_\alpha$ that we choose will have the special property of being \textit{almost-disjoint}: namely, any two members of $\mcM_\alpha$ will only have finite intersection. There are various ways in which such an almost-disjoint, uncountable family of infinite subsets of $L_\alpha$ may be built; we briefly describe one ahead; for now, let us assume the existence of such a family for every $L_\alpha$. We proceed to sketch how we use transfinite recursion to conclude with the desired global 2-coloring functions satisfying the distinguishing coloring condition. 

For every $\alpha < \kappa$, we will carefully choose a member $J_\alpha \in \mcM_\alpha$, with a view to coloring all the elements in $J_\alpha$ as red, and all the elements in $L \setminus J_\alpha$ blue, and in a way that guarantees that the distinguishing coloring condition is not violated within the subcollection  $\{L_\beta\}_{\beta < \alpha}$. At a high level, such a choice is made possible since the number of ordinals $\beta < \alpha$ is at most $|\alpha| < |\kappa|=\mfc$, whereas there are uncountably many available candidates in $\mcM_\alpha$, leaving room for choosing $J_\alpha$. This is where the choice of $\kappa$ as the \textit{initial} ordinal having cardinality $\mfc$, together with the almost-disjointedness property, is significant.

To get a sense of how the almost-disjointedness property helps, consider any $\beta < \alpha$ for which $L_\beta \subsetneq L_\alpha$. Then, in order for the distinguishable coloring condition to be satisfied, we wish for there to be a red-colored string in $L_\beta$ that is colored as blue in $L_\alpha$. That is, we want that $J_\beta \neq J_\alpha \cap L_\beta$. So, any $X \in \mcM_\alpha$ that satisfies $X \cap L_\beta = J_\beta$ is \textit{invalidated} as a candidate for being chosen as $J_\alpha$. But crucially, since $\mcM_\alpha$ is almost-disjoint, there cannot exist distinct $X, Y \in \mcM_\alpha$ that both satisfy this property, since otherwise, $J_\beta \subseteq X \cap Y$, and $|J_\beta|=\infty$. A similar argument holds in the case where $L_\beta \supsetneq L_\alpha$. Thus, every $\beta < \alpha$ invalidates at most a single member of $\mcM_\alpha$. Since $\mcM_\alpha$ is uncountable, we can therefore safely choose a $J_\alpha \in \mcM_\alpha$ to color red.

Finally, we briefly describe one way to construct the required almost-disjoint family. Recall that we wish to construct an uncountable family $\mcM_\alpha$ of infinite subsets of $L_\alpha$ such that every two members of the family have only a finite intersection. We will add a member $L_{\alpha, r} \subseteq L_\alpha$ to $\mcM_\alpha$ for every unique, infinite bit string $r$ (of which there are uncountably many), where the elements of $L_{\alpha, r}$ are determined by all the \textit{finite prefixes} of $r$. Since any two distinct bit strings $r$ and $s$ can agree on only finitely many prefixes, we can associate prefixes with elements within $L_\alpha$, and construct the family $\mcM_\alpha$ such that its members are almost-disjoint.

\subsection{Lower Bounds}
\label{sec:lower-bounds-sketch}

\paragraph{Finite-prefix coloring functions.} We start by showing a lower bound for \textit{finite-prefix} coloring functions, a simple but natural family of \globall coloring functions. A family of finite-prefix coloring functions is defined with respect to a function $f$ that maps finite subsets of $\N$ to the palette, and the terminal coloring function $c^f_L$ for a language $L=\{x_0,x_1,\dots\}$ is defined to be
\begin{align*}
    c^f_L(x_i)=f(\{x_0,x_1,\dots,x_i\}).
\end{align*}
We show that no family of finite-prefix coloring functions with any finite number of colors can globally suffice for language identification. For simplicity, we sketch the argument here for 2 colors; the argument extends naturally to $k > 2$ colors.

First, let us see how finite-prefix 2-coloring functions do not satisfy the distinguishable coloring condition, which is only a sufficient condition for identification; later, we will extend the argument to rule out identification. For this, we must exhibit two infinite subsets $A \subsetneq B$ of the natural numbers satisfying that $f(A_{\le x})=f(B_{\le x})$ for all $x \in A$ (where $A_{\le x}$ comprises of all elements in $A$ that are at most $x$). We attempt to build these sets greedily. That is, we start with $E_0=\{\}, F_0=\{1\}$, and aim to maintain growing set pairs $(E_i, F_i)$ for which it holds that $f(E_{i, \le x})=f(F_{i, \le x})$ for all $x \in E_i$. Given $(E_i, F_i)$, we attempt to find a candidate finite set $S$ of numbers strictly larger than all the numbers in $F_i$, with a view to append $S$ to $F_i$ but only the single element $\max(S)$ to $E_i$, such that the invariant
\begin{align*}
    f(E_i \cup \max(S))=f(F_i \cup S).
\end{align*}
is satisfied. If we can keep finding such sets $S$ and grow our set pairs indefinitely, we can set $A = \cup_{i \ge 0}E_i$ and $B=\cup_{i \ge 0}F_i$. Otherwise, we would halt at some finite set pair $(E_n, F_n)$; however, the halting condition implies that for any finite set $S$ of numbers above all the numbers in $F_n$, it holds that $f(F_n \cup S) \neq f(E_n \cup \max(S))$. In particular, this means that for any number $u > \max(F_n)$, and for any finite set $S$ of numbers larger than $\max(F_n)$ that has $\max(S)=u$, the color $f(F_n \cup S)$ is \textit{pinned down} to be either red or blue (i.e., the color \textit{not equal} to $f(E_n \cup u)$), solely as a function of $u$. We can thus set $A=F_n \cup C$ and $B=F_n \cup D$ for any infinite sets $C, D \subseteq \N_{>\max F_n}$, where $C \subsetneq D$.

We now extend the argument above to construct a countable collection $\mcC'$ for which not only the distinguishable coloring condition, but also identification is precluded. As in the above, we sketch the lower bound for $2$ colors.

By the characterization for identification with color traces in \cite{charikar26language} (see \Cref{thm:characterization-identification-with-terminal-traces}), the lower bound amounts to constructing a collection $\mcC'$ that contains a language $L$, such that for every finite, non-empty subset $T \subseteq L$, there exists a different language $L' \in \mcC'$ for which $T \subseteq L' \subsetneq L$, and furthermore, $f(L'_{\le x})=f(L_{\le x})$ for every $x \in L'$. %
We will construct $\mcC'$ to comprise of a special language $B$, together with languages $A_i$ for $i \in \N$. 

Towards this, recall the situation above in the lower bound for the distinguishable coloring condition, where we were forced to halt in our infinite construction. Here, we had a finite set $G$ which had the property that, for every $u > \max(G)$, and for any finite set $S \subseteq \N_{> \max(G)}$ satisfying $\max(S)=u$, the color $f(G \cup S)$ is \textit{determined} to be either red or blue \textit{solely} as a function of $u$ (a property that we denote later as ``1-coherence''). Namely, if $\N_{> \max(G)}=\{z_0, z_1,\dots\}$, we set $B$ to be $G \cup \N_{> \max(G)}$, and $A_i=G \cup \N_{> \max(G)} \setminus \{z_i\}$. We can readily see that every $A_i \subsetneq B$, and that for every finite, non-empty $T \subseteq B$, there exists $A_i$ satisfying $T \subseteq A_i \subsetneq B$; furthermore, for such an $A_i$, the fact that $f(A_{i, \le x})=f(B_{\le x})$ for every $x \in A_i$ follows simply from the constraint above on the set $G$. So, if we have a set $G$ that is 1-coherent, our job is complete. 

Otherwise, our high-level strategy for constructing $B$, $\{A_i\}_{i \in \N}$ is as follows: we attempt to construct an increasing sequence of finite sets $H_0 \subsetneq H_1 \subsetneq \dots$ in steps. At each step $s$, we also maintain growing finite approximations of $A_0,\dots,A_s$. Denoting by $A_i[s]$ the finite approximation of $A_i$ maintained at step $s$, we maintain the invariant that for every $i$, $A_i[s] \subseteq H_s$, and furthermore, $f(A_i[s]_{\le x})=f(H_{s, \le x})$ for every $x \in A_i[s]$. We initialize $H_0=A_{0}[0]=\{1\}$. Then, at each step $s$, we will attempt to grow exactly one of the finite approximations $A_0[s],\dots,A_s[s]$ by \textit{focusing} on some $i \le s$. When we focus on $A_i[s]$, we attempt to find some finite, non-empty set $S \subseteq \N_{> \max(H_s)}$ such that, denoting $u=\max(S)$, it holds that 
\begin{align*}
    f(A_i[s] \cup \{u\}) = f(H_s \cup S).
\end{align*}
If we are able to find such an $S$, we continue to the next step by setting $H_{s+1}=H_s \cup S$. We then initialize the finite approximation of $A_{s+1}$ as $A_{s+1}[s+1]=H_{s+1}$, set $A_{i}[s+1]=A_i[s] \cup \{u\}$, and set $A_{j}[s+1]=A_j[s]$ for every $j \neq i, s+1$. It is then not too hard to see that if we had inductively maintained the required invariant up until step $s$, our update rule maintains the invariant at step $s+1$. If this construction goes on indefinitely, we will set $B= \cup_{s \in \N} H_s$ and $A_i = \cup_{s \ge i}A_i[s]$ for every $i$. Since we want every $A_i$ to eventually become infinite, we choose the index $i$ that we focus on at step $s$ in such a way that every index $i$ gets focused on at infinitely many steps.

Suppose then that the construction goes on indefinitely. Note that every $A_i \subsetneq B$, since at infinitely many time steps, it is also the case that we do \textit{not} focus on $A_i$. We then claim that for every finite, non-empty $T \subseteq B$, there exists $A_i$ such that $T \subseteq A_i \subsetneq B$, and furthermore, $f(A_{i, \le x})=f(B_{\le x})$ for every $x \in A_i$. Indeed, since $B=\cup_{s \in \N} H_s$, any such $T$ is contained in some $H_s$, and note that $A_s[s]$ is initialized to $H_s$, post which $A_s$ only grows. Lastly, the property that $f(A_{s, \le x})=f(B_{\le x})$ follows by definition of the way that we extend both $A_s$ and $B$. So, in the case that the construction goes on indefinitely, we have the desired lower bound. 

In the other case where the construction gets stuck at some step $s$, by definition of getting stuck, we have the property that, for every $u > \max(H_s)$, and for any finite set $S \subseteq \N_{> \max(H_s)}$ satisfying $\max(S)=u$, it holds that $f(H_s \cup S)$ is necessarily \textit{not} equal to $f(A_{i}[s] \cup \{u\})$ for the particular $i$ that we were focusing on at step $s$. In other words, $f(H_s \cup S)$ is pinned down solely as a function of $\max(S)$, which means that $H_s$ is 1-coherent. We can thus fall back on our argument above to construct the required sets $B, \{A_i\}_{i \in \N}$.

\paragraph{Borel coloring functions.} 

We now transfer the identification lower bound for finite-prefix coloring functions above to the much more expressive class of \textit{Borel} coloring functions over a finite palette. This class includes essentially all natural, constructive notions of coloring functions. For example, the \globall coloring functions given by \cite{charikar26language}, when naturally interpreted as terminal coloring functions, are Borel coloring functions (see \Cref{thm:borel-map-infinite-palette}), albeit over an \textit{infinite} palette. These coloring functions satisfy the distinguishable coloring condition, and hence suffice for identification. Our lower bound thus shows that the use of an infinite palette by these functions is provably necessary. Moreover, this lower bound gives further evidence for the non-constructive nature of the transfinite-recursion based 2-coloring functions from our upper bounds above.

To show a lower bound for Borel coloring functions, we show how we can translate a \globall family of Borel coloring functions that satisfy the characterizing condition of \Cref{thm:characterization-identification-with-terminal-traces} to a \globall family of finite-prefix coloring functions satisfying this condition. This contradicts our lower bound above for finite-prefix coloring functions.

At the core of our reduction is an application of the celebrated Ramsay-theoretic theorem of Galvin and Prikry \cite{galvinprikry}, which shows that all Borel sets are Ramsay. We use this theorem, together with the assumption that the coloring function is Borel, to inductively construct an infinite subsequence of the natural numbers (\Cref{lemma:canonicalization}), over which the application of the Borel coloring function behaves like a finite-prefix coloring function over a finite palette. Concretely, having constructed a finite portion of the required subsequence so far, we recursively thin the remaining tail of the subsequence, such that the application of the Borel map on any subsequence of the tail is color-homogenized. This recursive thinning/color homogenization is made possible by the Galvin-Prikry theorem. 

Once we have constructed this sequence, it is only a matter of identifying the entire domain of the natural numbers with this subsequence, which lets us transfer the tell-tales and coloring discrepancies as required by the characterizing condition of \Cref{thm:characterization-identification-with-terminal-traces} from the Borel coloring function to the derived finite-prefix coloring function. This shows that the derived finite-prefix coloring function suffices for identifying every countable collection --- a contradiction to the lower bound above.

With this overview, we now proceed to formally establishing all the details of our results.

\section{Preliminaries}
\label{sec:preliminaries}

A language in this paper is a subset of $\Sigma^*$ for a finite alphabet set $\Sigma$. Without loss of generality (i.e., by considering any suitable bijection), we will identify $\Sigma^*$ by the set of natural numbers $\N=\{0,1,2,\dots\}$. A language $L$ is then simply a subset of $\N$, and a language collection is a (multi)set of subsets of $\N$.

\textit{Ordinal numbers} will feature abundantly in our discussion; following convention, we will generally denote these by Greek alphabets. We will use the von Neumann definition of ordinal numbers: a set $\alpha$ is an ordinal, if and only if every element of $\alpha$ is a subset of $\alpha$, and furthermore, $\alpha$ is (strictly) well-ordered by set-membership $(\in)$. Namely, every non-empty subset $\beta \subseteq \alpha$ has a unique least element $a$ such that $a < b$ (i.e., $a \in b$) for every $b \in \beta \setminus \{a\}$. Observe that any ordinal $\alpha$ may be equivalently defined as $\alpha = \{\gamma: \gamma < \alpha\}$.

For example, finite ordinals corresponding to the natural numbers are constructed as $0=\{\}$, $1=\{0\}$, $2=\{0,1\}$, and so on. The first infinite ordinal $\omega$ consists of all the finite ordinals, and is hence equivalent to the set $\N=\{0,1,2,\dots\}$. Thereafter, $\omega+1 = \{0,1,2,\dots,\omega\}$, and so on. The central notion that an ordinal captures is its unique \textit{order type} --- informally, this is the \textit{shape} of the order. For example, $2$ has the shape $0 < 1$, $\omega$ has the shape $0 < 1 < \dots$ with no largest element, $\omega + 1$ has the shape $0 < 1 < \dots < \omega$ with $\omega$ being the largest element --- each of these shapes are different.

The cardinality of a set $S$, denoted $|S|$, is its size. Two sets $S$ and $S'$ have the same cardinality if there exists a bijection between them. For example, observe that $|\omega|=|\omega+1|$. We will denote the cardinality of the set of real numbers $\R$ by $\mfc$.

In this paper, we will assume the \textit{Axiom of Choice}. The axiom of choice is equivalent to the \textit{Well-ordering Theorem}, which asserts that every set can be well-ordered. It is known that every well-ordered set $(S, <)$ is order-isomorphic to a \textit{unique} ordinal number $\alpha$, meaning that there exists a bijection $f$ between $S$ and $\alpha$, such that for any $a,b \in S$, $a < b \iff f(a) \in f(b)$.

A terminal $k$-coloring function $c$ maps $\N$ to $\palette$, where $\palette$ is a palette of $k$ colors. The following characterization of identification in the limit with terminal color traces essentially follows from Theorem 1 in \cite{charikar26language} --- while the result in \cite{charikar26language} is stated for \textit{trace} coloring functions, the argument for terminal coloring functions is identical.

\begin{restatable}[Characterization of Identification with Terminal Color Traces, essentially \cite{charikar26language}]{theorem}{characterization}
    \label{thm:characterization-identification-with-terminal-traces}
    Let $\mcC$ be a countable language collection. Then, $\mcC$ is identifiable in the limit with a terminal color trace given by the terminal coloring functions $\{c_L\}_{L \in \mcC}$ if and only if for every language $L \in \mcC$, there exists a finite ``tell-tale'' subset $T_L \subseteq L$, such that for every language $L' \in \mcC$ that is a proper subset of $L$, either (1) $L'$ does not contain $T_L$, or (2) there exists $x \in L'$ such that $c_{L'}(x) \neq c_{L}(x)$.
\end{restatable}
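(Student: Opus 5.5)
The plan is to adapt Angluin's classical tell-tale argument, in the form of Theorem~1 of \cite{charikar26language}, to colored presentations. Fix an enumeration $L_1, L_2, \dots$ of $\mcC$; repetitions are harmless. The adversary presents pairs $(w_t, c_K(w_t))$, where $K$ is the target. Call a language $L_j$ \emph{consistent at time $t$} if every string seen so far lies in $L_j$ and its observed color agrees with $c_{L_j}$. The whole proof works with this notion: the color constraint simply shrinks the set of consistent languages, and the tell-tale condition quoted in the statement is exactly the strengthening of Angluin's condition that this shrinkage permits.

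\textbf{Sufficiency.} Assume tell-tales $T_L$ exist. The learner does not need to compute them; it only needs to know them, since identifiability is non-uniform. At time $t$, the learner outputs the least $j \le t$ such that $L_j$ is consistent at time $t$ and $T_{L_j}$ is contained in the set of strings seen so far. If no such $j$ exists, it outputs anything.

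Let $K = L_{j^*}$ with $j^*$ minimal, so that every $L_j$ with $j<j^*$ is a different set. We check two things.
\begin{enumerate}
\item Eventually $T_K$ has been fully enumerated, and $K$ is always consistent. Hence $j^*$ is eventually a candidate.
\item Take $j<j^*$. If $K \not\subseteq L_j$, some string of $K$ outside $L_j$ eventually appears, and $L_j$ becomes permanently inconsistent. If $c_{L_j}$ and $c_K$ differ at some $x\in K$, the presentation of $x$ likewise kills $L_j$ forever.
\end{enumerate}
The remaining case is $K \subsetneq L_j$ with colors agreeing on all of $K$. By the tell-tale condition applied to $L = L_j$ and $L' = K$, alternative (2) fails, so $T_{L_j}\not\subseteq K$. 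Since only strings of $K$ are shown, $L_j$ is never a candidate. Hence only finitely many indices below $j^*$ can ever compete, each is eliminated after finite time, and the guess stabilizes at $j^*$.

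\textbf{Necessity.} Suppose a learner $\mathcal{A}$ identifies $\mcC$ with colors, and suppose for contradiction that some $L$ has no tell-tale. Then for every finite $T\subseteq L$ there is $L'\in\mcC$ with $T\subseteq L'\subsetneq L$ and $c_{L'}=c_L$ on $L'$. The key observation is that a colored presentation of such an $L'$ uses only strings that would also appear, with the same colors, in a presentation of $L$. So we may freely interleave colored enumerations of $L'$ and of $L$.

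We build a locking-sequence style adversary text in stages.
\begin{enumerate}
\item Fix an enumeration $x_1,x_2,\dots$ of $L$, and let $\sigma_0$ be the empty sequence.
\item At stage $n$, let $T$ be the set of strings occurring in $\sigma_{n-1}$ together with $x_1,\dots,x_n$, and pick $L'$ as above for this $T$.
\item Extend $\sigma_{n-1}$ by $x_1,\dots,x_n$, and then by a colored enumeration of $L'$, until $\mathcal{A}$ outputs an index for $L'$, which must happen since $\mathcal{A}$ identifies $L'$. Call the result $\sigma_n$.
\end{enumerate}
This is a legitimate prefix of a presentation of $L'$ (all strings lie in $T\subseteq L'$, and colors agree because $c_{L'}=c_L$ on $L'$), and $\mathcal{A}$ learns $L'$ from every presentation, so the wait is finite.

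The limit $\bigcup_n\sigma_n$ is a colored presentation of $L$: every $x_n$ appears, all colors are $c_L$, and all strings lie in $L$. Yet $\mathcal{A}$ outputs an index of a proper subset of $L$ infinitely often, so it never converges to $L$. This is a contradiction.

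\textbf{Main obstacle.} The delicate point is in necessity: the constructed text must be a valid colored presentation of $L$ and simultaneously of each $L'$ during the waiting phases. This works only because the coloring condition asserts agreement of $c_{L'}$ and $c_L$ on all of $L'$. For that reason I would verify this compatibility carefully at each stage. Everything else is Angluin's argument verbatim, which matches the paper's remark that the proof in \cite{charikar26language} is identical for terminal colorings.
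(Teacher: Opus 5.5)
Your proposal is correct and is essentially the argument the paper relies on. The paper gives no separate proof: it defers to Theorem~1 of \cite{charikar26language}, Angluin's tell-tale characterization with consistency strengthened to include colors, and notes that the argument carries over verbatim to terminal colorings. Your sufficiency learner (the least consistent index whose tell-tale has already appeared) and your necessity diagonalization (splicing colored texts of proper sublanguages $L'\subsetneq L$ whose colors agree with $c_L$ into a single text for $L$) are exactly that standard argument, transplanted to terminal colors.
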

The condition (2) in the characterization above, which is a sufficient condition for identification with colors traces, is denoted in \cite{charikar26language} as the \textit{distinguishable coloring condition}.

\paragraph{Distinguishable coloring condition.} A set of terminal $k$-coloring functions $\{c_{L}\}_{L \in \mcC}$ for a collection $\mcC=\{L_1,L_2,\dots\}$ satisfies the ``distinguishable coloring condition'' if for every $L, L' \in \mcC$ satisfying $L' \subsetneq L$, there exists $x \in L'$ for which
        $c_{L'}(x) \neq c_{L}(x)$.

\section{Collection-independent 2-coloring}
\label{sec:universal-2-coloring}

The simple proposition ahead shows, for every countable language collection, the existence of collection-dependent 2-coloring functions that satisfy the distinguishable coloring condition.

\begin{proposition}
    \label{prop:2-coloring-collection-dependent}
    Let $\mcC=\{L_1,L_2,\dots\}$ be any countable language collection, where every $|L_i|=\infty$. There exists a set of terminal 2-coloring functions $\{c_{L}\}_{L \in \mcC}$ that satisfies the distinguishable coloring condition, and hence makes $\mcC$ identifiable in the limit.
\end{proposition}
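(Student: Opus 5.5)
We build the colorings by a sentinel construction along an enumeration of the ordered pairs of languages in $\mcC$, following the sketch in \Cref{sec:upper-bounds-sketch}. Fix an enumeration $(i_1,j_1),(i_2,j_2),\dots$ of all ordered pairs of indices with $i\neq j$. Such an enumeration exists because $\mathbb{N}\times\mathbb{N}$ is countable. We then process the pairs in order while maintaining two things: a finite set $R_L\subseteq L$ of strings already committed to be red in $L$ (initially empty for every $L\in\mcC$), and the maximum $m$ of all sentinels chosen so far (initially $m=-1$), where strings are identified with natural numbers as in \Cref{sec:preliminaries}.

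At step $t$, let $L=L_{i_t}$ and $L'=L_{j_t}$. If $L\not\subsetneq L'$, do nothing. Otherwise $L$ is infinite, so we can pick a sentinel $x_t\in L$ with $x_t>m$. Add $x_t$ to $R_L$ and update $m\leftarrow x_t$. Once every pair has been processed, define $c_{L}(x)=\Red$ if $x\in R_L$ and $c_L(x)=\Blue$ otherwise, where $R_L$ now denotes the union of everything added to it over the whole process. Each $c_L$ is a well-defined function from $L$ to a two-color palette.

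To verify the distinguishable coloring condition, take any $L',L\in\mcC$ with $L'\subsetneq L$. The ordered pair $(L',L)$ appears at some finite step $t$, and at that step a sentinel $x_t\in L'$ is placed in $R_{L'}$, so $c_{L'}(x_t)=\Red$. We must check that $c_L(x_t)=\Blue$, i.e.\ that $x_t\notin R_L$. Here we use the strict monotonicity of the sentinels: every chosen sentinel is strictly larger than all earlier ones, so all sentinels are distinct. Each sentinel is added to exactly one set, namely $R_{L'}$ for the first coordinate of its own step. Therefore $x_t$ lies only in $R_{L'}$.

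One subtlety needs care. If $\mcC$ is a multiset, two indices can carry the same underlying set. We handle this by treating colorings as attached to indices, which is consistent with the characterization in \Cref{thm:characterization-identification-with-terminal-traces}. Equal sets are never proper subsets of each other, so such pairs simply trigger no action.

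Finally, \Cref{thm:characterization-identification-with-terminal-traces} gives identifiability in the limit. We take any tell-tale $T_L$, for instance $T_L=\emptyset$. Every proper subset $L'\subsetneq L$ in $\mcC$ then satisfies condition (2), so the characterization applies. The only step requiring care is uniqueness of sentinels, and choosing each sentinel above all previous ones, which is possible because every language is infinite, resolves it. No real obstacle is expected.
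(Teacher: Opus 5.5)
Your proof is correct and follows the paper's sentinel construction essentially verbatim: enumerate the pairs, pick in the smaller language a fresh sentinel larger than all previous ones, color it red there and blue in the superset, and use distinctness of sentinels to rule out conflicting assignments. The differences are cosmetic. You enumerate ordered pairs where the paper uses unordered pairs with a WLOG, and you compare strings via the numeric identification where the paper compares lengths. Your rule that every uncommitted string is blue also fixes a sentinel's color in languages outside its own pair, a case the paper leaves implicit.
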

\begin{proof}
    Let us enumerate all the pairs $(L_i, L_j)$ where $i < j$ as $(L_{i_1}, L_{j_1}), (L_{i_2}, L_{j_2}), \dots$. We will initialize a size variable $s_0=1$, and process these pairs in order. At step $n$ (where $n=1,2,\dots$), we will first check if either $L_{i_n} \subsetneq L_{j_n}$ or $L_{j_n} \subsetneq L_{i_n}$; if so, assume without loss of generality that $L_{i_n} \subsetneq L_{j_n}$. We will now find a string $x_{n} \in L_{i_n}$ that has size at least $s_{n-1}$, and mark this string as ``special''. Such a special string necessarily exists in $L_{i_n}$ since we assume every language is infinite, and the alphabet is finite. Then, we set $c_{L_{i_n}}(x_n)=\Red$, $c_{L_{j_n}}(x_n)=\Blue$, and update $s_n=|x_n|+1$. Otherwise, if neither $L_{i_n} \subsetneq L_{j_n}$ nor $L_{j_n} \subsetneq L_{i_n}$, we set $s_n=s_{n-1}$, and proceed to the next pair.

    For every language $L \in \mcC$, and for every $x \in \Sigma^*$ that was never chosen to be special at any time in the process above, we set $c_L(x)=\Blue$. This completes the specification of the terminal 2-coloring functions $\{c_L\}_{L \in \mcC}$. Observe that since we always strictly increase the size of special strings, a string $x_n$ chosen to be special at step $n$ is never chosen to be special at any other step. Thus, at no point do we ever overwrite colors.

    It remains to argue that the distinguishable coloring condition is satisfied. Fix any $L, L' \in \mcC$ satisfying $L \subsetneq L'$. These languages must exist as some pair $(L_{i_n}, L_{j_n})$ in our enumeration above; assume without loss of generality that $L_{i_n}=L, L_{j_n}=L'$. Then, the construction above would have found a special $x_n \in L_{i_n}$ and set $c_{L_{i_n}}(x_n)=\Red$, $c_{L_{j_n}}(x_n)=\Blue$. Because we never overwrite colors, these are the final colors assigned to $x_n$ by $c_{L_{i_n}}$ and $c_{L_{j_n}}$. Thus, the distinguishable coloring condition is satisfied.
\end{proof}

\begin{remark}[Finite Languages I]
    \label{remark:finite-languages-1}
    If the collection $\mcC$ also comprises of finite languages, the argument above is easily modified to work with 3 colors. Let $\mcC_1$ comprise of all the finite languages in $\mcC$, and let $\mcC_2=\mcC \setminus \mcC_1$ comprise of all the infinite languages. Since $\mcC$ is countable, both $\mcC_1$ and $\mcC_2$ are countable. We construct terminal coloring functions $\{c_L\}_{L \in \mcC_2}$ that use the colors red and blue, and satisfy the distinguishable coloring condition within $\mcC_2$, as in the above. For the finite languages in $\mcC_1$, we use terminal coloring functions $\{c_L\}_{L \in \mcC_1}$ that are constant functions mapping to the color $\Green$. In this case, if the input comprises of either red or blue traces, the learning algorithm knows that the target language is in $\mcC_2$, and hence operates only within $\mcC_2$; since the traces satisfy the distinguishable coloring condition within $\mcC_2$, identification is successful. Otherwise, if the terminal traces are green, the target language is determined to be a finite language from $\mcC_1$. Since all languages in $\mcC_1$ are finite, the collection is identifiable \cite{gold1967language}. In particular, the input will eventually comprise of all the finitely many strings from the target language, and hence the algorithm can simply output the language in $\mcC_1$ that is \textit{equal} to the observed set of strings.
\end{remark}

The terminal 2-coloring function $c_L$ for a language $L$ above depends on the specific collection $\mcC$ that the language $L$ is considered within. One can ask: does there exist a \textit{universal} terminal 2-coloring function $c_L$ that can be used for language $L$ uniformly across all countable collections $\mcC$ that contain $L$? This amounts to asking: does there exist a set of terminal 2-coloring functions $\{c_{L}\}_{L \in \mcC}$ that satisfies the distinguishable coloring condition, where $\mcC$ contains all infinite subsets of $\Sigma^*$?

\begin{theorem}
    \label{thm:2-coloring-collection-independent}
    Let $\mcC$ be the collection of all infinite subsets of $\N$. There exists a set of terminal 2-coloring functions $\{c_{L}\}_{L \in \mcC}$ that satisfies the distinguishable coloring condition.
\end{theorem}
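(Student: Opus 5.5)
We will follow the transfinite-recursion strategy from \Cref{sec:upper-bounds-sketch}. Using the Axiom of Choice, let $\kappa$ be the least ordinal of cardinality $\mfc$ and enumerate $\mcC=\{L_\alpha:\alpha<\kappa\}$. Every $\alpha<\kappa$ then satisfies $|\alpha|<\mfc$. Each coloring $c_{L_\alpha}$ will be given by a set $J_\alpha\subseteq L_\alpha$: we color red exactly the strings of $J_\alpha$ and blue the strings of $L_\alpha\setminus J_\alpha$. The distinguishable coloring condition for a pair $L_\beta\subsetneq L_\gamma$ asks for some $x\in L_\beta$ with $c_{L_\beta}(x)\neq c_{L_\gamma}(x)$. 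Since both colorings are red/blue indicators, this is equivalent to $J_\beta\neq J_\gamma\cap L_\beta$.

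\textbf{Step 1 (almost-disjoint families).} For each infinite $L\subseteq\N$ we fix a bijection $\phi_L$ between $L$ and the set $\{0,1\}^{<\omega}$ of finite bit strings. For each infinite bit string $r\in\{0,1\}^\omega$ we set
\[
X_{L,r}=\{\phi_L^{-1}(r|_n):n\in\N\},
\]
where $r|_n$ is the length-$n$ prefix of $r$. Each $X_{L,r}$ is an infinite subset of $L$. For $r\neq s$ the sets $X_{L,r}$ and $X_{L,s}$ share only the finitely many prefixes common to $r$ and $s$. Hence $\mcM_L=\{X_{L,r}\}_r$ is an almost-disjoint family of cardinality $\mfc$ consisting of infinite subsets of $L$. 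We write $\mcM_\alpha=\mcM_{L_\alpha}$.

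\textbf{Step 2 (recursion).} Suppose $J_\beta\in\mcM_\beta$ has been chosen for every $\beta<\alpha$. We must choose $J_\alpha\in\mcM_\alpha$ so that every $\beta<\alpha$ with $L_\beta$ comparable to $L_\alpha$ is distinguished. The key claim is that each such $\beta$ rules out at most one candidate $X\in\mcM_\alpha$.

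\emph{Case $L_\beta\subsetneq L_\alpha$.} Here $X$ is bad iff $X\cap L_\beta=J_\beta$. Suppose two distinct $X,Y\in\mcM_\alpha$ were both bad. Then $J_\beta\subseteq X\cap Y$, which is finite, contradicting $|J_\beta|=\infty$.

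\emph{Case $L_\alpha\subsetneq L_\beta$.} Here $X$ is bad iff $X=J_\beta\cap L_\alpha$. This pins down $X$ uniquely.

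The incomparable pairs impose no constraint. So at most $|\alpha|<\mfc=|\mcM_\alpha|$ candidates are excluded, and we may pick $J_\alpha$ (via a choice function) among the rest.

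\textbf{Step 3 (verification).} Take any $L\subsetneq L'$ in $\mcC$, say $L=L_\beta$ and $L'=L_\gamma$ with $\beta\neq\gamma$. The pair was handled at stage $\max(\beta,\gamma)$, and colors are never revised afterwards. So $J_\beta\neq J_\gamma\cap L_\beta$, and the condition holds.

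The main obstacle is Step 2. It needs two facts to combine. First, the initial-ordinal choice of $\kappa$, so that fewer than $\mfc$ constraints are active at each stage. Second, the infinitude of every $J_\beta$, which makes almost-disjointness force uniqueness of the bad candidate. That infinitude is ensured because each $J_\beta$ is drawn from a family of infinite sets.
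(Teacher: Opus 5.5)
Your proposal is correct and follows the paper's proof essentially step for step. You enumerate $\mcC$ along the initial ordinal $\kappa$ of cardinality $\mfc$ and attach to each $L_\alpha$ an almost-disjoint family $\mcM_\alpha$ of size $\mfc$, built from the branches of the binary tree transported into $L_\alpha$; each earlier comparable $L_\beta$ then forbids at most one candidate, by almost-disjointness plus $|J_\beta|=\infty$ in one case and trivially in the other. The only cosmetic difference is that you use a bijection $\phi_L$ from $L$ to finite bit strings, whereas the paper composes a fixed bijection $b$ with the increasing enumeration $\pi_\alpha$ of $L_\alpha$, which amounts to the same construction.
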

\begin{proof}
    Since the set of all infinite subsets of $\N$ is uncountable, note that $|\mcC| = \mfc$, where $\mfc$ is the cardinality of the continuum. By the well-ordering theorem, $\mcC$ can be well-ordered. This implies that $\mcC$ is order-isomorphic to a unique ordinal having cardinality $\mfc$. Let $\kappa$ be the \textit{initial ordinal} having this cardinality. That is, $|\kappa| = \mfc$, and furthermore $|\alpha| < |\kappa|$ for every ordinal $\alpha < \kappa$. Since $\mcC$ and $\kappa$ have the same cardinality, choose a bijection $e:\kappa \to \mcC$, where we define $L_\alpha := e(\alpha)$ for every $\alpha < \kappa$. We thus have that
    \begin{align}
        \label{eqn:def-C}
        \mcC = \{L_\alpha: \alpha < \kappa\}.
    \end{align}

    We will now associate every language $L_{\alpha}$ with a family of infinite subsets of it, where this family will have size $\mfc$. Furthermore, the family will be \textit{almost-disjoint}, meaning that any two members within it have a finite intersection. These two properties will ensure the existence of at least a single infinite subset within the family, such that if the terminal coloring function $c_{L_\alpha}$ colors all the elements of this subset red, and all the rest of the elements of $L_\alpha$ blue, the distinguishable coloring condition is satisfied.
    
    Towards building such families for every $L_\alpha$, let us begin with a building block. Let $\mcZ$ denote the set of all infinite bit strings; note that $|\mcZ|=\mfc$. %
    Additionally, fix any bijection $b$ that maps the set of all finite bit strings to $\N$. Then, for any infinite bit string $r \in \mcZ$, define $M_r$ to comprise of all $b(r_{\le i})$ values for $i \ge 0$, where $r_{\le i}$ is the finite prefix $r_0,\dots,r_i$ of $r$. Concretely,
    \begin{align}
        \label{eqn:def-M_r}
        M_r := \{b(r_{\le i}): i \ge 0\}.
    \end{align}
    Note that $|M_r|=|\omega|$. We now claim that $|M_r \cap M_s| < \infty$ for any $r \neq s$. To see this, fix some $r \neq s$, and let $i \ge 0$ be the leftmost index at which $r_i \neq s_i$. Then, observe that for any $j \ge i$, $r_{\le j} \neq s_{\le j}$, and hence $ r_{\le j} \notin M_s$, $s_{\le j} \notin M_r$. Thus, $|M_r \cap M_s|=|\{b(r_{\le j}):j < i\}| < \infty$ as desired.

    Now fix any $\alpha < \kappa$, and let $L_\alpha=\{\ell^\alpha_0 < \ell^\alpha_1 < \dots\}$ under the usual ordering of the natural numbers. Since $|L_{\alpha}|=|\omega|$, consider the bijection $\pi_\alpha$ mapping $\omega$ to $L_{\alpha}$, where
    \begin{align*}
        \pi_\alpha(n) = \ell^\alpha_n, \qquad n \ge 0.
    \end{align*}
    Then, for each $r \in \mcZ$, define 
    \begin{align*}
        L_{\alpha, r} := \pi_\alpha(M_r) = \{\pi_\alpha(n): n\in M_r\} = \{\ell^\alpha_n: n \in M_r \},
    \end{align*}
    where $M_r$ is as defined above. Observe that $|L_{\alpha, r}|=|M_r|=|\omega|$ and $L_{\alpha, r} \subseteq L_\alpha$; furthermore, for any $r \neq s$,
    \begin{align*}
        |L_{\alpha, r} \cap L_{\alpha, s}| = |\{\ell^\alpha_n: n \in M_r \cap M_s \}| < \infty,
    \end{align*}
    which also implies that every $L_{\alpha,r}$ is distinct. So, we define the family $\mcM_\alpha$ as
    \begin{align*}
        \mcM_\alpha = \{L_{\alpha, r}: r \in \mcZ\}.
    \end{align*}
    Since $|\mcZ|=\mfc$, we have that $|\mcM_\alpha|=\mfc$.

    We will now use transfinite recursion to define a sequence $\{J_{\alpha}\}_{\alpha < \kappa}$, where every $J_\alpha \in \mcM_{\alpha}$, $J_\alpha \neq \emptyset$, and for every $\beta < \alpha$, if $L_\beta \subsetneq L_\alpha$, then $J_\alpha \cap L_\beta \neq J_\beta$, and if $L_\beta \supsetneq L_\alpha$, then $J_\beta \cap L_\alpha \neq J_\alpha$.

    As the base case, define $J_0$ to be an arbitrary element of $\mcM_0$. Now fix any ordinal $\alpha < \kappa$.
    and assume that $J_\beta$ satisfying the required condition for every $\beta < \alpha$ has been defined. 
    We will now specify how $J_\alpha$ is defined. 

    For every $\beta < \alpha$, define a subfamily $\mcF_{\alpha, \beta} \subseteq \mcM_\alpha$ of forbidden candidates as follows.

    \paragraph{Case 1: $L_\beta \subsetneq L_\alpha$.} In this case, define
    \begin{align*}
        \mcF_{\alpha, \beta} = \{X : X \in \mcM_\alpha, X \cap L_\beta = J_\beta\}.
    \end{align*}
    We claim that $|\mcF_{\alpha, \beta}| \le 1$. To see this, suppose $X, Y \in \mcF_{\alpha, \beta}$ and $X \neq Y$. Then, $X \cap L_\beta = J_\beta$, and also, $Y \cap L_\beta = J_\beta$. This implies that  $J_\beta \subseteq X \cap Y$, meaning that $|X \cap Y|=\infty$, which contradicts that the members of $\mcM_\alpha$ are almost-disjoint. 

    \paragraph{Case 2: $L_\beta \supsetneq L_\alpha$.} In this case, define
    \begin{align*}
        \mcF_{\alpha, \beta} = \{X : X \in \mcM_\alpha, J_\beta \cap L_\alpha = X\}.
    \end{align*}
    Note again that $|\mcF_{\alpha, \beta}| \le 1$, since $J_\beta \cap L_\alpha$ is some fixed set, and depending on whether it exists in $\mcM_\alpha$, $\mcF_{\alpha, \beta}$ is either empty or singleton.

    \paragraph{Case 3: Neither $L_\beta \subsetneq L_\alpha$ nor $L_\beta \supsetneq L_\alpha$.} In this case, define $\mcF_{\alpha, \beta} = \emptyset$.

    In all cases, it holds that $|\mcF_{\alpha, \beta}| \le 1$. Now define
    \begin{align*}
        \mcF_\alpha := \bigcup_{\beta < \alpha} \mcF_{\alpha, \beta}.
    \end{align*}
    We claim that $|\mcF_\alpha| < |\mcM_\alpha|$. Indeed, recall that $|\mcM_\alpha|=\mfc$; furthermore, $|\alpha| < |\kappa| = \mfc$, since $\kappa$ was chosen to be the initial ordinal having cardinality equal to $\mfc$. Since every $|\mcF_{\alpha, \beta}| \le 1$, we get that $|\mcF_\alpha| \le |\alpha| < \mfc = |\mcM_\alpha|$. But since $\mcF_\alpha \subseteq \mcM_\alpha$, this implies that $\mcM_\alpha \setminus \mcF_\alpha \neq \emptyset$. So, define $J_\alpha$ to be an arbitrary element from $\mcM_\alpha \setminus \mcF_\alpha$. Then, $J_\alpha \in \mcM_\alpha, J_\alpha \neq \emptyset$. Furthermore, for every $\beta < \alpha$, if $L_\beta \subsetneq L_\alpha$, then we fall under Case 1 above, wherein any $X \in \mcM_\alpha$ which satisfies $X \cap L_\beta=J_\beta$ would have been excluded as a candidate for $J_\alpha$, ensuring that $J_\alpha \cap L_\beta \neq J_\beta$. If $L_\beta \supsetneq L_\alpha$, we fall under Case 2 above, wherein any $X \in \mcM_\alpha$ that satisfies $J_\beta \cap L_\alpha = X$ would have been excluded as a candidate for $J_\alpha$, ensuring that $J_\beta \cap L_\alpha \neq J_\alpha$. Thus, $J_\alpha$ satisfies all the required conditions. 

    By transfinite recursion, the sequence $\{J_\alpha\}_{\alpha < \kappa}$ is thus well-defined.
    
    Now, let $\alpha, \beta$ be any distinct ordinals. Since the ordinals are totally ordered by membership, either $\alpha < \beta$ or $\beta < \alpha$. Thus, our construction above ensures that for any distinct ordinals $\alpha, \beta$, if $L_{\alpha} \subsetneq L_\beta$, it holds that $J_\beta \cap L_\alpha \neq J_\alpha$.

    We now specify the terminal coloring functions. Recall from \eqref{eqn:def-C} above that the languages $L_\alpha$ for $\alpha < \kappa$ uniquely range over all of $\mcC$. Then, for every $\alpha < \kappa$, define 
    \begin{align}
        \label{eqn:def-universal-2-coloring}
        c_{L_\alpha}(x) = \begin{cases}
            \Red & x \in J_\alpha, \\
            \Blue & \text{otherwise.}
        \end{cases}
    \end{align}
    It remains to argue that these terminal coloring functions satisfy the distinguishable coloring condition. Fix any ordinals $\alpha, \beta$ where $L_\alpha \subsetneq L_\beta$. We must show that there exists $x \in L_\alpha$ for which $c_{L_\alpha}(x) \neq c_{L_\beta}(x)$. Indeed, by our reasoning above, it holds that $J_\beta \cap L_\alpha \neq J_\alpha$. This implies either the existence of some $x \in J_\alpha$ which does not belong to $J_\beta \cap L_\alpha$, or the existence of some $x \in J_\beta \cap L_\alpha$ which does not belong to $J_\alpha$. In the former case, since $J_\alpha \subseteq L_\alpha$, such an $x$ satisfies that $x \notin J_\beta$, meaning that $c_{L_\alpha}(x)=\Red$, $c_{L_\beta}(x)=\Blue$. In the latter case too, we obtain an $x \in L_\alpha$ satisfying $x \in J_\beta$ and $x \notin J_\alpha$, meaning that $c_{L_\alpha}(x)=\Blue$, $c_{L_\beta}(x)=\Red$. Thus, in either case, we ensure the existence of an $x \in L_\alpha$ that satisfies $c_{L_\alpha}(x) \neq c_{L_\beta}(x)$, and the proof is complete.
\end{proof}

\begin{remark}[Finite Languages II]
    \label{remark:finite-languages-2}
    As in \Cref{remark:finite-languages-1}, we can extend $\mcC$ to also include all finite subsets of $\N$, and construct collection-independent terminal coloring functions using 3 colors. For this, we can have $\mcC_1$ comprise of all the finite languages in $\mcC$, and $\mcC_2=\mcC \setminus \mcC_1$ comprise of all the infinite languages. $\mcC_2$ is uncountable, whereas $\mcC_1$ is countable, since the collection of all finite finite subsets of $\N$ is countable. For $\mcC_1$, we use terminal coloring functions $\{c_L\}_{L \in \mcC_1}$ that are constant functions mapping to the color $\Green$, and for $\mcC_2$, we use the terminal coloring functions $\{c_L\}_{L \in \mcC_2}$ as given by the transfinite recursion argument above.
\end{remark}

\section{No collection-independent Borel finite-colorings}
\label{sec:lower-bounds}

The terminal 2-coloring functions from the section above arise from a transfinite recursion argument. One can ask: is there a more constructive procedure for collection-independent 2-coloring functions? As a simpler target, how about a constructive procedure for $k$-coloring functions, for some finite $k$? Recall that the terminal coloring functions arising from the trace coloring functions in \cite{charikar26language} are collection-independent and constructive, but use infinitely many colors.

\subsection{Finite-prefix coloring functions}
\label{sec:finite-prefix-coloring-functions}

As a starting point for ``simple'', constructive but collection-independent terminal coloring functions, we begin by considering the family of \textit{finite-prefix} coloring functions. Recall that we assume, without loss of generality, that a language $L=\{x_0,x_1,\dots,\}$ is an infinite subset of $\N$, listed in the natural increasing order of $\N$. A finite-prefix coloring function $c_L$ assigns a color to $x_i$ solely as a function of $\{x_0,\dots,x_i\}$.

\begin{definition}[Finite-prefix coloring functions]
    Let $\mcC$ be the collection of all infinite subsets of $\N$, where every language $L \in \mcC$ is enumerated in its natural increasing order.
    A family of finite-prefix coloring functions $\{c^f_L\}_{L \in \mcC}$ is specified by a function $f$ that maps finite (ordered) subsets of $\N$ to the palette $\palette$, and which satisfies that, for any language $L = \{x_0,x_1,\dots\}$ in its increasing enumeration,
    \begin{align*}
        c^f_L(x_i) = f(\{x_0,x_1,\dots,x_i\}).
    \end{align*}
\end{definition}

We will show that finite-prefix coloring functions, that are allowed to use an arbitrarily large but finite palette size, do not suffice globally for language identification. Namely, we can construct a collection, where it is impossible to identify the target language even with terminal colors given by the coloring functions.

As a warm-up, we will first show that, for any given family of finite-prefix coloring functions that use a palette of size $k$, there exists a countable collection that does not satisfy the distinguishable coloring condition (recall, the distinguishable coloring condition is only a sufficient condition for language identification). We will later strengthen this argument to show that there exists a countable collection for which identification is impossible.

\begin{theorem}
    \label{thm:finite-prefix-lb-distinguishable-coloring-condition}
    Let $\mcC$ be the collection of all infinite subsets of $\N$, and let $\{c^f_L\}_{L \in \mcC}$ be any family of finite-prefix coloring functions specified by the function $f$, and mapping to a finite palette $\palette$. There exists a countable collection $\mcC'$ such that $\{c^f_L\}_{L \in \mcC'}$ does not satisfy the distinguishable coloring condition.
\end{theorem}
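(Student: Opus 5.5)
The plan is to exhibit two infinite sets $A \subsetneq B \subseteq \N$ with $f(A_{\le x}) = f(B_{\le x})$ for every $x \in A$, where $A_{\le x} = \{a \in A : a \le x\}$. Then $c^f_A(x) = c^f_B(x)$ for every $x \in A$, so the countable collection $\mcC' = \{A, B\}$ violates the distinguishable coloring condition. I will build $A$ and $B$ greedily as limits of finite pairs $(E_i, F_i)$ with $E_i \subseteq F_i$ and $\max E_i = \max F_i$ whenever $E_i \neq \emptyset$. The invariant is that $f(E_{i,\le x}) = f(F_{i,\le x})$ for every $x \in E_i$. Start with $E_0 = \emptyset$ and $F_0 = \{0\}$, so that $F_0 \setminus E_0 \neq \emptyset$ and the eventual inclusion will be proper.

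\textbf{Extension step.} Given $(E_i, F_i)$, look for a finite nonempty $S \subseteq \N_{>\max F_i}$ with $f(E_i \cup \{\max S\}) = f(F_i \cup S)$. If one exists, set $E_{i+1} = E_i \cup \{\max S\}$ and $F_{i+1} = F_i \cup S$. Every new element exceeds everything already present, so the prefixes at old elements $x$ are unchanged and the invariant at old $x$ is preserved. At the new element $u = \max S$ the invariant is exactly the defining equation of $S$. If extension succeeds forever, take $A = \bigcup_i E_i$ and $B = \bigcup_i F_i$. Both are infinite since each step adds a new element, $0 \in B \setminus A$, and for $x \in A$ the prefixes $A_{\le x}$ and $B_{\le x}$ are fixed once $x$ is added. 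So $A \subsetneq B$ is the desired pair.

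\textbf{Stuck case.} Suppose extension fails at a pair $(E_n, F_n)$, and let $m = \max F_n$. Then for every $u > m$ and every finite $S \subseteq \N_{>m}$ with $\max S = u$, we have $f(F_n \cup S) \neq f(E_n \cup \{u\})$. With two colors this pins down $f(F_n \cup S) = g(u)$, a value depending only on $u$, and this is enough. Take $A = F_n \cup C$ and $B = F_n \cup D$ for any infinite $C \subsetneq D \subseteq \N_{>m}$. For $x \in F_n$ both prefixes equal $F_{n,\le x}$. For $x \in C$, both $f(A_{\le x})$ and $f(B_{\le x})$ have the form $f(F_n \cup S)$ with $\max S = x$, so both equal $g(x)$.

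\textbf{Main obstacle: $k > 2$ colors.} The stuck condition now only excludes one color, so $f(F_n \cup S)$ is not a function of $\max S$. I would iterate: rerun the greedy construction on top of $F_n$ using $k-1$ colors. Let the new pair start at $(F_n, F_n \cup \{m+1\})$, and restrict all new sets to $\N_{>m+1}$. Its colors at the relevant prefixes avoid the excluded color $f(E_n \cup \{u\})$, which depends only on the new maximum $u$. The subtlety is that the excluded color varies with $u$, so "palette of size $k-1$" must be read pointwise. I would induct on the statement: \emph{if $f(G \cup S) \in P(\max S)$ for all finite $S \subseteq \N_{>\max G}$, where $|P(u)| \le j$, then the conclusion holds above $G$.} The case $j = 1$ is trivial (the pinned-down case). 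For the inductive step, a stuck greedy step strengthens the hypothesis to $|P'(u)| \le j-1$ over the new base, and the comparison set $E_n \cup \{u\}$ is itself a set of the form $G \cup S$, so its color lies in $P(u)$. Either some greedy run never gets stuck and gives $A \subsetneq B$ directly, or after at most $k-1$ stuck steps the color is fully pinned down and we finish as above. Verifying this induction carefully is the step I expect to need the most care.
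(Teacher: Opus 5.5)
Your proposal is correct and follows essentially the same route as the paper. Your pointwise hypothesis $|P(u)|\le j$ is exactly the paper's $j$-coherence, and your restarted greedy run from $(G, G\cup\{\max G+1\})$ is precisely the argument of \Cref{lemma:induction-coherence}: a stuck run makes $F_n$ a $(j-1)$-coherent base, and the $j=1$ case is finished by $A=G\cup C$, $B=G\cup D$. (Your stated invariant $\max E_i=\max F_i$ fails at the start of the restarted runs, where $\max E_0=m<m+1=\max F_0$, but the argument never uses it.)
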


In order to prove the theorem above, we will define the notion of a ``$t$-coherent set''.  For a set $X \subseteq \N$, let $X_{> a}$ (respectively $X_{< a}$) denote all the natural numbers in $X$ that are greater (respectively lesser) than $a$. For any finite $G \subseteq \N$, let $\max(G)$ denote the largest number in $G$, with the convention that $\max(\emptyset)=-1$.

\begin{definition}[$t$-coherent set]
    Let $G \subseteq \N$ be a finite set with $g=\max(G)$. For any $t \ge 1$, we say that $G$ is $t$-coherent with respect to the function $f$ if for every $u > g$, there exists a subset $\palette_u \subseteq \palette$ of the palette of size at most $t$, such that for every finite $S \subseteq \N_{> g}$ having $\max(S)=u$, it holds that $f(G \cup S) \in \palette_u$.
\end{definition}

In words, if we append finitely many elements to $G$, then the evaluation of $f$ on the resulting set belongs to a set of $t$ colors, where this set depends \textit{only} on the largest element appended to $G$. We will use the convention that no set is $0$-coherent with respect to $f$.

Note that in order to prove \Cref{thm:finite-prefix-lb-distinguishable-coloring-condition}, it suffices to construct two infinite sets $A, B$ satisfying $A \subsetneq B$, such that for every $x \in A$, $c^f_A(x)=c^f_B(x)$; by the definition of finite-prefix coloring functions, this follows if $f(A_{\le x})=f(B_{\le x})$ for every $x \in A$.

We will now prove the following lemma by induction on the palette size.

\begin{lemma}
    \label{lemma:induction-coherence}
    Let $G \subseteq \N$ be a finite set, %
    and suppose that $G$ is $t$-coherent with respect to the function $f$, for $t \ge 1$. Then, at least one of the following is true: (1) there exist infinite subsets $A$ and $B$ of $\N$ satisfying $A \subsetneq B$ such that $f(A_{\le x})=f(B_{\le x})$ for every $x \in A$, (2) there exists finite $G' \supseteq G$ such that $G'$ is $(t-1)$-coherent with respect to $f$.
\end{lemma}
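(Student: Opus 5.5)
Let $g=\max(G)$ and, for each $u>g$, let $\palette_u$ (with $|\palette_u|\le t$) be the color set given by $t$-coherence of $G$. We build $A\subsetneq B$ greedily, as in the overview. We maintain finite sets $E_i\subseteq F_i$, both of the form $G\cup(\text{elements}>g)$, with the invariant $f((E_i)_{\le x})=f((F_i)_{\le x})$ for every $x\in E_i$. Start with $E_0=G$ and $F_0=G\cup\{g+1\}$. The invariant holds on $G$ trivially, since the prefixes coincide there, and $g+1\in F_0\setminus E_0$ ensures that the limit sets will satisfy $A\subsetneq B$. At stage $i$, look for a finite nonempty $S\subseteq\N_{>\max F_i}$ with $u=\max(S)$ such that
\[
f(E_i\cup\{u\})=f(F_i\cup S).
\]
If such an $S$ exists, set $E_{i+1}=E_i\cup\{u\}$ and $F_{i+1}=F_i\cup S$.

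The new prefix equalities are the only ones to check. For $x\in E_i$, the prefixes $(E_{i+1})_{\le x}$ and $(F_{i+1})_{\le x}$ equal the old ones, because everything added exceeds $\max F_i\ge\max E_i$. For $x=u$, the equality is exactly the chosen condition. If the process never halts, put $A=\bigcup_i E_i$ and $B=\bigcup_i F_i$. Both are infinite, since $u$ strictly increases. Any prefix $A_{\le x}$ is fixed from the stage at which $x$ enters, and the same holds for $B_{\le x}$, so the invariant passes to the limit. We also have $A\subseteq B$ and $g+1\in B\setminus A$. This gives alternative (1).

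Now suppose the process halts at stage $n$. Set $G'=F_n\supseteq G$ and $h=\max F_n$. For every $u>h$ and every finite $S\subseteq\N_{>h}$ with $\max S=u$, we have $f(F_n\cup S)\neq f(E_n\cup\{u\})$.

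We also have $F_n\cup S=G\cup S'$ with $S'=(F_n\setminus G)\cup S\subseteq\N_{>g}$ and $\max S'=u$. By $t$-coherence of $G$ this gives $f(F_n\cup S)\in\palette_u$. Hence $f(F_n\cup S)\in\palette_u\setminus\{f(E_n\cup\{u\})\}$.

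This excluded color depends only on $u$. It has to be removed from $\palette_u$ itself, which is not automatic: if $f(E_n\cup\{u\})\notin\palette_u$, the removal saves nothing. This is the main point to check.

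The fix is to use that $E_n$ also has the form $G\cup S''$ with $S''\subseteq\N_{>g}$. This holds because $E_n\supseteq G$ and all its other elements exceed $g$; it needs $E_n$ to contain $G$ and not merely a subset of it, which is why we start from $E_0=G$. Moreover $\max(S''\cup\{u\})=u$ with $u>g$, so $t$-coherence of $G$ gives $f(E_n\cup\{u\})\in\palette_u$.

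Therefore the set $\palette'_u:=\palette_u\setminus\{f(E_n\cup\{u\})\}$ has size at most $t-1$, and it witnesses that $G'=F_n$ is $(t-1)$-coherent. This is alternative (2). When $t=1$ the same argument makes $\palette'_u$ empty, contradicting the existence of $S=\{u\}$. So with the convention that no set is $0$-coherent, a halt is impossible and alternative (1) must hold, which is consistent with the statement.
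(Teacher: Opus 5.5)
Your proposal is correct and follows essentially the paper's proof. It uses the same greedy chain from $E_0=G$, $F_0=G\cup\{g+1\}$, extended while some $S$ gives $f(E_i\cup\{u\})=f(F_i\cup S)$; on halting, $t$-coherence of $G$ puts both $f(F_n\cup S)$ and $f(E_n\cup\{u\})$ in $\palette_u$, so $G'=F_n$ is $(t-1)$-coherent. The only difference is that you dispose of $t=1$ by showing the chain cannot halt, whereas the paper treats $t=1$ by the direct choice $A=G\cup C$, $B=G\cup D$ for infinite $C\subsetneq D\subseteq\N_{>g}$.
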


\Cref{thm:finite-prefix-lb-distinguishable-coloring-condition} follows from the lemma above, since for any finite-prefix coloring functions specified by a function $f$, and that use a finite palette $\palette$ of size $k \ge 1$, it holds that \textit{every} finite subset of $\N$ is $k$-coherent with respect to $f$, together with our convention that no finite set is $0$-coherent with respect to $f$.

\begin{proof}[Proof of \Cref{lemma:induction-coherence}]
    We will prove the lemma by induction on $t$. For the base case, let $t=1$, and suppose that $G$ is $1$-coherent with respect to $f$. We will construct infinite sets $A, B$ satisfying $A \subsetneq B$ such that $f(A_{\le x}) = f(B_{\le x})$ for every $x \in A$. Let $g=\max(G)$, and let $C$ and $D$ be any infinite subsets of $\N_{> g}$ satisfying $C \subsetneq D$. We will set $A = G \cup C$ and $B = G \cup D$. It remains to argue that $f(A_{\le x}) = f(B_{\le x})$ for every $x \in A$. If $x \in G$, then $A_{\le x}=B_{\le x}=G_{\le x}$, and the desired equality holds. Otherwise, $x \in C$, meaning also that $x > g$. In this case, defining $S_1 := C_{\le x}$ and $S_2 := D_{\le x}$, we have that $A_{\le x}= G \cup S_1$ and $B_{\le x}=G \cup S_2$, and furthermore, $\max(S_1)=\max(S_2)=x$. Then, by definition of $G$ being $1$-coherent, there exists a \textit{single} color $c_x$ such that $f(G \cup S_1)=f(G \cup S_2)=c_x$; this immediately gives that $f(A_{\le x})=f(B_{\le x})$ as desired.

    Suppose now that $t > 1$, and the lemma holds for all $t' < t$. Let $G$ be the given set that is $t$-coherent with respect to $f$, and let $g=\max(G)$. Set $E_0=G$, and $F_0=G \cup \{g+1\}$. Starting with $(E_0,F_0)$, we will attempt to construct an infinite chain $(E_0,F_0) \to (E_1,F_1) \to \dots$, where every $E_i,F_i$ is finite, $E_i \subsetneq E_{i+1}$, $F_i \subsetneq F_{i+1}$, $E_i \subsetneq F_i$, and furthermore, for any $x \in E_i$, it holds that $f(E_{i, \le x})=f(F_{i, \le x})$.

    Observe first that $(E_0, F_0)$ already satisfies that $E_0 \subsetneq F_0$, and also that $f(E_{0, \le x})=f(F_{0, \le x})$ for every $x \in E_0$. Now suppose that $(E_i, F_i)$ has been built, and let $h = \max(F_i)$. Let $S \subseteq \N_{> h}$ be any non-empty, finite set, for which, denoting $\max(S)=u$, it holds that
    \begin{align}
        \label{eqn:extension-step}
        f(E_i \cup \{u\}) = f(F_i \cup S).
    \end{align}
    If such a set $S$ exists, we set $E_{i+1}=E_i \cup \{u\}$, and $F_{i+1}=F_i \cup S$. Then, consider any $x \in E_{i+1}$; if $x \in E_i$, $E_{i+1, \le x}=E_{i,\le x}$ and $F_{i+1, \le x}=F_{i, \le x}$, and $f(E_{i+1, \le x})=f(F_{i+1, \le x})$ simply because $f(E_{i, \le x})=f(F_{i, \le x})$. Otherwise, $x =u$; in this case, by construction $E_{i+1, \le x}=E_i \cup \{u\}$ and $F_{i+1, \le x}=F_i \cup S$, and hence, $f(E_{i+1, \le x})=f(F_{i+1, \le x})$ by \eqref{eqn:extension-step}. The other requirements, namely $E_{i} \subsetneq E_{i+1}, F_i \subsetneq F_{i+1}$ and $E_{i+1} \subsetneq F_{i+1}$, can also be readily verified.

    Now suppose that we are able to construct an infinite chain of $(E_i, F_i)$. In this case, we set $A = \cup_{i \ge 0} E_i$, $B = \cup_{i \ge 0} F_i$. Note that $A$ and $B$ thus constructed are both infinite sets, and satisfy that $A \subsetneq B$. This is because the initial element $g+1 \in F_0 \setminus E_0$ belongs to $B$, and is never later added to any $E_i$, since all later additions to $E_i$ are above $\max(F_0)=g+1$. Furthermore, any $x \in A$ belongs to some $E_i$, and $A_{\le x}=E_{i, \le x}$, $B_{\le x}=F_{i, \le x}$, which also means that $f(A_{\le x})=f(B_{\le x})$ by construction. Thus, in this case, the inductive step holds because we satisfy (1) in the lemma statement.

    Otherwise, the construction of the chain stops at $(E_n, F_n)$ for some finite $n$. Let $h=\max(F_n)$; by definition of stopping, it must be the case that for every finite $S \subseteq \N_{> h}$, where we denote $\max(S)=u$, it holds that 
    \begin{align}
        f(F_n \cup S) \neq f(E_n \cup \{u\}).
    \end{align}
    But now observe that $F_n \cup S = G \cup T_1$ and $E_{n} \cup \{u\}=G \cup T_2$ for some finite sets $T_1,T_2 \subseteq \N_{> \max(G)}$, which satisfy that $\max(T_1)=\max(T_2)=u$. Since $G$ is $t$-coherent with respect to $f$, this means that both $f(F_n \cup S)$ and $f(E_n \cup \{u\})$, belong to some palette $\palette_u$, where $|\palette_u| \le t$. Since $f(F_n \cup S) \neq f(E_n \cup \{u\})$, we conclude that, for every $u > h=\max(F_n)$, and for every finite $S \subseteq \N_{> h}$ that satisfies that $\max(S)=u$, $f(F_n \cup S) \in \palette'_u$, where $|\palette'_u| \le t-1$. Thus, setting $G'=F_n \supseteq G$ renders that $G'$ is $(t-1)$-coherent with respect to $f$; the induction is therefore complete.
\end{proof}

Building up on the lower bound proof above, we will now show that for any given family of finite-prefix coloring functions that use a palette of size $k$, there exists a countable collection for which identification is impossible.

\begin{theorem}
    \label{thm:finite-prefix-lb-identification}
    Let $\mcC$ be the collection of all infinite subsets of $\N$, and let $\{c^f_L\}_{L \in \mcC}$ be any family of finite-prefix coloring functions specified by the function $f$, and mapping to a finite palette $\palette$. There exists a countable collection $\mcC'$ such that $\mcC'$ is not identifiable in the limit with terminal color traces given by $\{c^f_L\}_{L \in \mcC'}$.
\end{theorem}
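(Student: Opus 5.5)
We follow the overview: by \Cref{thm:characterization-identification-with-terminal-traces}, it suffices to build a countable collection $\mcC'$ containing a language $B$ with the following property. For every finite non-empty $T \subseteq B$ there is some $A \in \mcC'$ with $T \subseteq A \subsetneq B$ and $f(A_{\le x}) = f(B_{\le x})$ for all $x \in A$. In that case $B$ has no tell-tale, so $\mcC'$ is not identifiable. We will take $\mcC' = \{B\} \cup \{A_i : i \in \N\}$. As in \Cref{lemma:induction-coherence}, we argue by downward induction on coherence, proving the following statement. If a finite set $G$ is $t$-coherent, then either such a family $(B,\{A_i\})$ exists, or some finite $G' \supseteq G$ is $(t-1)$-coherent. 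Every finite set is $k$-coherent and none is $0$-coherent, so after at most $k$ rounds the first alternative must occur.

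\textbf{Base case $t=1$.} Let $g = \max(G)$ and $\N_{>g} = \{z_0, z_1, \dots\}$. Set $B = G \cup \N_{>g}$ and $A_i = B \setminus \{z_i\}$. Each $A_i$ is an infinite proper subset of $B$, and any finite $T \subseteq B$ misses some $z_i$, so $T \subseteq A_i$. Now take $x \in A_i$. If $x \in G$, the two prefixes coincide. If $x > g$, then $A_{i,\le x} = G \cup S_1$ and $B_{\le x} = G \cup S_2$ with $\max S_1 = \max S_2 = x$, and $1$-coherence forces $f(G \cup S_1) = f(G \cup S_2)$.

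\textbf{Inductive step.} Given a $t$-coherent $G$, run a dovetailed construction with $H_0 = A_0[0] = G$, and fix a schedule that focuses on every index infinitely often and not at every step (for instance, the standard pairing enumeration). We maintain the invariant that $A_i[s] \subseteq H_s$ and $f(A_i[s]_{\le x}) = f(H_{s,\le x})$ for all $x \in A_i[s]$. At step $s$, focusing on $i \le s$, we look for a non-empty finite $S \subseteq \N_{>\max H_s}$ with $u = \max S$ and $f(A_i[s] \cup \{u\}) = f(H_s \cup S)$. If one exists, set $H_{s+1} = H_s \cup S$, $A_i[s+1] = A_i[s] \cup \{u\}$, $A_{s+1}[s+1] = H_{s+1}$, and leave the other approximations unchanged. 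The invariant is preserved because elements added later are larger, so earlier prefixes are unchanged.

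\textbf{The infinite case.} If the construction never stops, let $B = \bigcup_s H_s$ and $A_i = \bigcup_{s \ge i} A_i[s]$. Each $A_i$ is infinite because $i$ is focused infinitely often. Each $A_i$ is a proper subset of $B$: at any step after $i$'s creation where we focus on some $j \ne i$ with $|S| \ge 1$, the new element $u \in B$ is not added to $A_i$, and any other elements of $S$ are added to neither. Prefix agreement between $A_i$ and $B$ holds by the invariant. Any finite $T \subseteq B$ lies in some $H_s = A_s[s] \subseteq A_s$.

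\textbf{The stuck case.} If the construction stops at step $s$ while focusing on $i$, then for every $u > h = \max H_s$ and every admissible $S$ with $\max S = u$, we have $f(H_s \cup S) \ne f(A_i[s] \cup \{u\})$. Both sets have the form $G \cup T$ with $\max T = u$, since $A_i[s]$ and $H_s$ both contain $G$ and extend it only above $\max G$. By $t$-coherence both colors lie in $\palette_u$, which has at most $t$ colors. So $f(H_s \cup S)$ lies in $\palette_u \setminus \{f(A_i[s] \cup \{u\})\}$, a set of at most $t-1$ colors that depends only on $u$. Hence $G' = H_s$ is $(t-1)$-coherent.

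The main obstacle is this stuck-case step. Checking that $A_i[s] \supseteq G$, which is needed to apply $t$-coherence, requires the initialization $A_0[0] = G$ (rather than $\{1\}$) and $A_{s+1}[s+1] = H_{s+1} \supseteq G$. A second point needing care is ensuring that each $A_i$ is a proper subset of $B$; the schedule above handles this.
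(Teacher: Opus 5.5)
Your proposal is correct and follows the paper's proof essentially step for step. The paper also reduces the theorem to \Cref{lemma:induction-coherence-2} via the characterization, handles $t=1$ by deleting one point $z_i$ from a tail above $\max(G)$, and in the inductive step runs the same dovetailed construction of $H_s$ and $A_i[s]$ initialized at $G$, ending either with the family $B,\{A_i\}$ or with $H_s$ being $(t-1)$-coherent; the only differences are cosmetic (you take the whole tail $\N_{>g}$ and an abstract focusing schedule in place of the explicit sequence $d(s)$).
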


By the characterizing condition in \Cref{thm:characterization-identification-with-terminal-traces}, it suffices to construct a countable collection $\mcC'$ which contains a language $L$, such that for any finite, non-empty subset $T \subseteq L$, there exists a different language $L' \in \mcC'$ such that $T \subseteq L' \subsetneq L$, and furthermore, $f(L'_{\le x})=f(L_{\le x})$ for every $x \in L'$. Using our convention that \textit{every} finite subset of $\N$ is $k$-coherent with respect to $f$ when $f$ uses a palette of size $k$, it suffices to prove the following lemma to establish \Cref{thm:finite-prefix-lb-identification}.

\begin{lemma}
    \label{lemma:induction-coherence-2}
    Let $G \subseteq \N$ be a finite set, %
    and suppose that $G$ is $t$-coherent with respect to the function $f$, for $t \ge 1$. Then, at least one of the following is true: (1) there exist infinite sets $B$, $\{A_i\}_{i \in \N}$ that are subsets of $\N$, such that for every finite, non-empty $T \subseteq B$, there exists $i \in \N$ such that $T \subseteq A_i \subsetneq B$, and $f(A_{i,\le x})=f(B_{\le x})$ for every $x \in A_i$, (2) there exists finite $G' \supseteq G$ such that $G'$ is $(t-1)$-coherent with respect to $f$.
\end{lemma}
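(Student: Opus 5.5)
We argue by cases on $t$. The case $t=1$ is handled directly, and for $t>1$ we run the dovetailed construction from the overview \Cref{sec:lower-bounds-sketch}. Either that construction runs forever and yields (1), or it gets stuck and yields (2) through the palette-shrinking argument of \Cref{lemma:induction-coherence}. No genuine induction hypothesis is needed, since (2) is produced directly.

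\textbf{Base case $t=1$.} Let $g=\max(G)$ and enumerate $\N_{>g}=\{z_0<z_1<\dots\}$. Set
\[
B=G\cup\N_{>g}, \qquad A_i=B\setminus\{z_i\}.
\]
Each $A_i$ is infinite and $A_i\subsetneq B$. Any finite $T\subseteq B$ misses some $z_i$, so $T\subseteq A_i$ for that $i$. For $x\in A_i\cap G$ we have $A_{i,\le x}=B_{\le x}=G_{\le x}$. For $x\in A_i$ with $x>g$, both $A_{i,\le x}$ and $B_{\le x}$ have the form $G\cup S$ with $S\subseteq\N_{>g}$ and $\max(S)=x$, so $1$-coherence forces equal colors. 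Hence (1) holds.

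\textbf{Case $t>1$.} Fix a schedule $\sigma:\N\to\N$ with $\sigma(s)\le s$ in which every index appears infinitely often, for instance the standard pairing enumeration. Set
\[
H_0=A_0[0]=G\cup\{g+1\}.
\]
Throughout, maintain the invariant that $A_i[s]\subseteq H_s$ and $f(A_i[s]_{\le x})=f(H_{s,\le x})$ for all $x\in A_i[s]$. At step $s$, with $i=\sigma(s)$ and $h=\max(H_s)$, look for a nonempty finite $S\subseteq\N_{>h}$ with $u=\max(S)$ such that
\[
f(A_i[s]\cup\{u\})=f(H_s\cup S).
\]
If such an $S$ exists, set $H_{s+1}=H_s\cup S$ and $A_i[s+1]=A_i[s]\cup\{u\}$. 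Leave the other $A_j$ unchanged, and initialize $A_{s+1}[s+1]=H_{s+1}$. The invariant persists because the new element $u$ is the top element of both sets and lies above everything previously present, so the earlier prefixes are untouched. I should also make sure each newly initialized $A_{s+1}$ will later be a proper subset of $B$. This holds because when $A_{s+1}$ is not in focus at a later step $s'$, the elements of $S\setminus\{u\}$ are missing from it, or $u$ itself is missing. That is exactly the non-focus argument in the overview, given that every step adds at least one element.

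\textbf{Infinite run.} Let $B=\bigcup_s H_s$ and $A_i=\bigcup_{s\ge i}A_i[s]$. Each $A_i$ is infinite because index $i$ is focused infinitely often. Each $A_i\subsetneq B$, since at a step $s>i$ focused on some $j\ne i$, the element $u$ added to $H$ is not added to $A_i$. Any finite $T\subseteq B$ lies in some $H_s=A_s[s]\subseteq A_s$. The color equality passes to the limit, because $A_{i,\le x}$ and $B_{\le x}$ are already fixed at the stage where $x$ enters. Hence (1) holds.

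\textbf{Stuck run.} Suppose the run halts at step $s$ with focus $i$. Then for every finite $S\subseteq\N_{>\max H_s}$ with $\max(S)=u$,
\[
f(H_s\cup S)\ne f(A_i[s]\cup\{u\}).
\]
Both $H_s\cup S$ and $A_i[s]\cup\{u\}$ have the form $G\cup T$ with $T\subseteq\N_{>g}$ and $\max(T)=u$. Here we use $G\subseteq A_i[s]$, which holds because every approximation is initialized to some $H_{s'}\supseteq G$. By $t$-coherence both colors lie in $\palette_u$, so $f(H_s\cup S)\in\palette_u\setminus\{f(A_i[s]\cup\{u\})\}$. This set has size at most $t-1$ and depends only on $u$. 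Hence $G'=H_s$ is $(t-1)$-coherent, which is (2).

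\textbf{Main obstacle.} The delicate step is the bookkeeping in the infinite run: checking that properness $A_i\subsetneq B$ and the invariant survive the interleaving. Verifying $G\subseteq A_i[s]$ is easy but is what makes the coherence comparison legitimate.
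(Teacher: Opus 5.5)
Your proof is correct and follows the paper's argument. You use the same $1$-coherent base construction (the paper allows any infinite $D\subseteq\N_{>g}$; you take $D=\N_{>g}$) and the same dovetailed stage construction with $A_{s+1}[s+1]=H_{s+1}$, which yields (1) if it runs forever and makes $G'=H_s$ $(t-1)$-coherent if it halts. The differences are minor: you start from $H_0=G\cup\{g+1\}$ instead of $G$, and you state explicitly two points the paper leaves implicit, namely that the induction hypothesis is never actually invoked and that $G\subseteq A_i[s]$ is what justifies applying $t$-coherence of $G$ in the stuck case.
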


\begin{proof}
    We will prove the lemma by induction on $t$. For the base case, let $t=1$, and suppose that $G$ is $1$-coherent with respect to $f$. We will construct infinite sets $B$, $\{A_i\}_{i\in \N}$ such that for every finite, non-empty $T \subseteq B$, there exists $i \in \N$ such that $T \subseteq A_i \subsetneq B$ and $f(A_{i,\le x})=f(B_{\le x})$ for every $x \in A_i$. Let $g=\max(G)$, and let $D=\{z_0 < z_1 < \dots\}$ be any infinite subset of $\N_{> g}$. We will set $B = G \cup D$, and $A_i = G \cup (D \setminus \{z_i\})$. Fix any finite, non-empty $T \subseteq B$,  and let $s = \max(T)$. Consider any $A_i$, where $z_i > s$; it is clear that $T \subseteq A_i \subsetneq B$. It remains to argue that $f(A_{i, \le x}) = f(B_{\le x})$ for every $x \in A_i$. If $x \in G$, then $A_{i, \le x}=B_{\le x}=G_{\le x}$, and the desired equality holds. Otherwise, $x \in D \setminus \{z_i\}$, meaning also that $x > g$. In this case, defining $S_1 := (D \setminus \{z_i\})_{\le x}$ and $S_2 := D_{\le x}$, we have that $A_{i, \le x}= G \cup S_1$ and $B_{\le x}=G \cup S_2$, and furthermore, $\max(S_1)=\max(S_2)=x$. Then, by definition of $G$ being $1$-coherent, there exists a \textit{single} color $c_x$ such that $f(G \cup S_1)=f(G \cup S_2)=c_x$; this immediately gives that $f(A_{i, \le x})=f(B_{\le x})$ as desired.

    Suppose now that $t > 1$, and the lemma holds for all $t' < t$. Let $G$ be the given set that is $t$-coherent with respect to $f$, and let $g=\max(G)$. We will attempt to construct an infinite chain of finite sets $H_0 \subsetneq H_1 \subsetneq H_2 \cdots$, starting with $H_0=G$. At step $s$, we will also maintain a family of finite sets $A_i[s]$ for every $i \le s$. These sets will satisfy that: 
    \begin{align}
        \label{eqn:stage-building-property}
        A_i[s] \subseteq H_s, \text{ and } f(A_i[s]_{\le x}) = f(H_{s, \le x}) \text{ for every $x \in A_i[s]$, and for every $i \le s$.}
    \end{align}
    The sets $A_i[s]$ for $s \ge i$ are intended to be increasing finite approximations of the set $A_i$. We will incrementally build up the sets $H_s$ and $A_i[s]$ for $i \le s$; if this process continues indefinitely, we will set $A_i = \bigcup_{s \ge i} A_i[s]$ for every $i$. At each step of the construction, we will \textit{focus} on a particular $A_i$, and append an element to it; since we want each $A_i$ to be infinite, we will have to focus on every $A_i$ infinitely often. One way to do this is the following: at step $s$, focus on $i=d(s)$, where $d(s)$ is the element at index $s$ in the sequence 
    $$0,0,1,0,1,2,0,1,2,3,0,1,2,3,4,\dots$$
    
    We begin by initializing $H_0=G$, $A_0[0]=G$. Note that property \eqref{eqn:stage-building-property} is immediately satisfied for this initialization. Now suppose that $H_s$, $A_i[s]$ for $i \le s$ satisfying \eqref{eqn:stage-building-property} have been built. Consider the sets $A_{d(s)}[s]$ and $H_{s}$, and let $h=\max(H_{s})$. Let $S \subseteq \N_{> h}$ be any finite, non-empty set, for which, denoting $\max(S)=u$, it holds that
    \begin{align}
        \label{eqn:inductive-step-stage-building}
        f(A_{d(s)}[s] \cup \{u\}) = f(H_s \cup S).
    \end{align}
    If such a set $S$ exists, we will continue the construction as follows: we set $H_{s+1}=H_s \cup S$ and $A_{s+1}[s+1]=H_{s+1}$. For $i < s+1$ where $i \neq d(s)$, we set $A_i[s+1] = A_i[s]$. Finally, for $i = d(s)$, we set $A_i[s+1]=A_i[s] \cup \{u\}$. 

    Note that since $A_{s+1}[s+1]=H_{s+1}$, the property \eqref{eqn:stage-building-property} for $i=s+1$ is immediately satisfied. Now consider $i=d(s)$, for which $A_i[s+1]=A_i[s] \cup \{u\}$. Recall also that $H_{s+1}=H_s \cup S$, where $S \subseteq \N_{> \max(H_s)}$. Then, $A_i[s+1] \subseteq H_{s+1}$; additionally, for any $x \in A_i[s+1]$, where $x \neq u$, we have that $f(A_i[s+1]_{\le x})=f(A_i[s]_{\le x})$, which, by assumption, is equal to $f(H_{s, \le x})$, which is furthermore equal to $f(H_{s+1, \le x})$. As for $x=u$, we also have, by \eqref{eqn:inductive-step-stage-building}, that $f(A_i[s+1]_{\le x})=f(A_i[s] \cup \{u\})=f(H_s \cup S)=f(H_{s+1, \le x})$. Thus, property \eqref{eqn:stage-building-property} holds for $i=d(s)$. Finally, for $i \neq d(s), i \neq s+1$, since $A_i[s+1]=A_i[s]$, and elements added to $H_s$ are all above $\max(H_s)$, property \eqref{eqn:stage-building-property} holds simply by the previously maintained invariant. Thus, the construction validly proceeds forward.

    Now consider the case that the construction goes on indefinitely. Then, we will set $B= \bigcup_{s \in \N} H_s$, and $A_i = \bigcup_{s \ge i} A_i[s]$. Since at each step of the construction, we append at least an element to $H_s$, we have that $B$ is infinite. Furthermore, since for every $i$, $d(s)=i$ for infinitely many $s$, and we append an element to $A_{i}[s]$ at each such $s$, we also have that $A_i$ is infinite.

    We now argue that $A_i \subsetneq B$ for every $i$; to see this, observe that at infinitely many steps $s$ of the construction, we do \textit{not} focus on $A_i$. At each such step, we append an element to $H_s$ (and hence to $B$), which can never later be added to $A_i$, since all later additions to $A_i$ are larger than $\max(H_s)$; thus, $A_i \subsetneq B$. 

    Now fix any finite, non-empty $T \subseteq B$. By construction, any such $T$ must be contained in $H_s$ for some $s \ge 0$. Now recall that $A_s[s]$ is initialized to $H_s$, and thereafter, $A_s$ only grows. Thus, $T \subseteq A_s \subsetneq B$. It remains to argue that for every $x \in A_s$, $f(A_{s, \le x})=f(B_{\le x})$. Observe that by virtue of the initialization $A_s[s]=H_s$, and the fact that every subsequent element added to both $A_s$ and $B$ is larger than $\max(H_s)$, the desired equality holds for every $x \in H_s$. Now consider any element $x > \max(H_s)$ in $A_s$ --- this element must have been added to $A_s$ at some stage $q$, for which $d(q)=s$. Simultaneously, we would have incorporated a set $S$ into $B$, where $\max(S)=x$, and formed $H_{q+1}=H_q \cup S$. All subsequent additions to both $A_s$ and $B$ are entirely above $x$. Therefore, by the invariant maintained in \eqref{eqn:inductive-step-stage-building}, it is ensured that $f(A_{s, \le x})=f(A_{s}[q] \cup \{x\})=f(H_q \cup S) = f(B_{\le x})$ as desired. 
    
    We have thus argued that if the construction above goes on indefinitely, the inductive step in the proof of the lemma holds. Otherwise, the construction stops at some finite step $s$, by which time we will have built some finite set $H_s$; let $h=\max(H_s)$. Since the construction stopped, it must be the case that for any finite, non-empty $S \subseteq \N_{> h}$, denoting $u=\max(S)$, it holds that
    \begin{align*}
        f(A_{d(s)}[s] \cup \{u\}) \neq f(H_s \cup S).
    \end{align*}
    But now, observe that $A_{d(s)}[s] \cup \{u\} = G \cup F_1$ and $H_s \cup S = G \cup F_2$ for some finite sets $F_1,F_2$ that satisfy $\max(F_1)=\max(F_2)=u$. Since $G$ is $t$-coherent with respect to $f$, we have that both $f(A_{d(s)}[s] \cup \{u\})$ and $f(H_s \cup S)$ belong to some common palette $P_u$, where $|P_u| \le t$. But since $f(A_{d(s)}[s] \cup \{u\}) \neq f(H_s \cup S)$, we conclude that for every $u > h = \max(H_s)$, and for every finite $S \subseteq \N_{> h}$ satisfying $\max(S)=u$, it holds that $f(H_s \cup S) \in P'_u$, where $|P'_u| \le t-1$. Thus, we can set $G'=H_s \supseteq G$, which renders $G'$ to be $(t-1)$-coherent with respect to $f$, completing the inductive proof of the lemma.
\end{proof}

\subsection{Borel coloring functions}
\label{sec:borel-lb}

We now significantly generalize the class of coloring functions from finite-prefix coloring functions to \textit{Borel} coloring functions.
We will first precisely define the notion of Borel coloring functions that we consider. Let $S$ be any discrete (finite or countably infinite) set, and let $\mcZ_S$ denote the set of infinite strings whose characters belong to $S$, i.e.,
\begin{align*}
    \mcZ_S=\{(z_0,z_1,\dots):z_i \in S\; \forall i \in \N\}.
\end{align*}
We will consider the standard product topology over $\mcZ_S$, where the basic open sets are given by finite prefixes $p=(p_0,\dots,p_{n-1}) \in S^*$:
\begin{align}
    \label{eqn:open-set-form}
    \mcO_p = \{X \in \mcZ_S: X_0=p_0,\dots,X_{n-1}=p_{n-1}\}.
\end{align}
That is, the open set $\mcO_p$ comprises of all the strings in $\mcZ_S$ that begin with $p$. Then, the Borel subsets of $\mcZ_S$ are the elements of the $\sigma$-algebra generated by the open sets $\mcO_p$ --- namely, the smallest collection of subsets of $\mcZ_S$, which contains all the basic open sets $\mcO_p$ for every $p \in S^*$, and is closed under complements, countable unions and countable intersections.

We will now introduce some additional notation: for any $S \subseteq \N$, where $|S|=|\omega|$, let $[S]^\omega = \{L \subseteq S: |L|=|\omega|\}$. %
Similar to the above, we will consider the standard product topology over $[S]^\omega$, whose basic open sets are given by the sets 
\begin{align}
    \label{eqn:open-sets-languages}
    \{x_0 < x_1 < \dots \in [S]^\omega\mid x_0 = p_0,\dots,x_{n-1}=p_{n-1}\}
\end{align}
for every finite prefix $p_0 < \dots < p_{n-1}$. The Borel subsets of $[S]^\omega$ are then the elements of the $\sigma$-algebra generated by these open sets.

Note that a language is an element of $[\N]^\omega$. We will be concerned with coloring functions that map the strings in a language to a sequence of colors in a ``Borel'' manner, i.e., as given by a Borel map $J$ mapping $[\N]^\omega$ to $\mcZ_\palette$, where $\palette$ is a finite palette having size $k$; for convenience, we will assume $\palette=\{1,2,\dots,k\}$. That is, $J$ has the property that $J^{-1}(Y)$ is Borel subset of $[\N]^\omega$, for every Borel subset $Y \subseteq \mcZ_\palette$.

We now formally define the family of Borel coloring functions:

\begin{definition}[Borel coloring functions]
    Let $\mcC=[\N]^\omega$. A family of Borel coloring functions $\{c^J_L\}_{L \in \mcC}$ is specified by a Borel map $J:[\N]^\omega \to \mcZ_\palette$, which satisfies that, for any language $L = \{x_0 < x_1 < \dots\}$,
    \begin{align*}
        c^J_L(x_i) = J(L)_i.
    \end{align*}
\end{definition}

We will show that Borel coloring functions that use a finite palette do not suffice for identification. A key tool that we will use is the following famous result by Galvin and Prikry \cite{galvinprikry} which shows that every Borel set is Ramsey.\footnote{The result in \cite{galvinprikry} is originally stated for $Z=\N$; the form that we state follows simply by identifying every element in $\N$ with an element in $Z$.}

\begin{theorem}[Galvin-Prikry \cite{galvinprikry}]
    \label{thm:galvin-prikry}
    Let $Z \in [\N]^\omega$, and let $R \subseteq [Z]^\omega$ be a Borel subset of $[Z]^\omega$. Then, there exists $Y \in [Z]^\omega$ such that either $[Y]^\omega \subseteq R$ or $[Y]^\omega \cap R = \emptyset$.
\end{theorem}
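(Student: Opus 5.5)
The plan is to prove the statement for $Z=\N$ first and then transfer it. Let $z_0<z_1<\cdots$ enumerate $Z$. The map $X\mapsto\{z_n:n\in X\}$ is an order-preserving bijection $[\N]^\omega\to[Z]^\omega$. It sends the basic open sets \eqref{eqn:open-sets-languages} to basic open sets, so it is a homeomorphism, and it maps Borel sets to Borel sets and sets of the form $[Y]^\omega$ to sets of the form $[Y']^\omega$. So it suffices to treat $Z=\N$.

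For $Z=\N$ I would follow Ellentuck's route and prove a stronger, hereditary property. Use the \emph{Ellentuck neighbourhoods} $[s,A]=\{X\in[\N]^\omega: s\sqsubseteq X,\ X\setminus s\subseteq A\}$, where $s$ is finite, $A$ is infinite, $\min A>\max s$, and $s\sqsubseteq X$ means $s$ is an initial segment of $X$. Call $R$ \emph{completely Ramsey} if for every such $[s,A]$ there is an infinite $B\subseteq A$ with $[s,B]\subseteq R$ or $[s,B]\cap R=\emptyset$. Taking $s=\emptyset$, $A=\N$ gives the theorem. It therefore suffices to show two things: the completely Ramsey sets contain the metrically open sets, and they form a $\sigma$-algebra.

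\textbf{Step 1 (open sets).} I would use combinatorial forcing, fixing an open $R$ and a neighbourhood $[s,A]$. Say $B$ \emph{accepts} $t$ if $[t,B]\subseteq R$, and \emph{rejects} $t$ if no infinite $B'\subseteq B$ accepts $t$. Two facts drive the argument.
\begin{itemize}
\item Diagonalization: there is an infinite $B\subseteq A$ that decides every finite $t$ with $s\sqsubseteq t\subseteq s\cup B$. Build $b_0<b_1<\cdots$ and shrinking tails $A_0\supseteq A_1\supseteq\cdots$, deciding at each stage the finitely many new $t$.
\item Pigeonhole: if $B$ rejects $t$, then after shrinking, $B$ rejects $t\cup\{n\}$ for all but finitely many $n\in B$. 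Otherwise infinitely many extensions would be accepted, and gathering them gives a $B'$ accepting $t$.
\end{itemize}
If $B$ accepts $s$ we are done. If $B$ rejects $s$, iterate the pigeonhole fact through a fusion sequence to get $C\subseteq B$ that rejects every $t$ with $s\sqsubseteq t\subseteq s\cup C$. Then $[s,C]\cap R=\emptyset$: if $X\in[s,C]\cap R$, openness of $R$ gives a finite initial segment $t$ of $X$ whose basic open set lies in $R$, and that basic set contains $[t,C]$. So $C$ would accept $t$, contradicting the rejection of $t$.

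\textbf{Step 2 ($\sigma$-algebra).} Closure under complements is immediate from the symmetric definition. Countable unions are the main obstacle. Call $R$ \emph{Ramsey null} if every $[s,A]$ has an infinite $B\subseteq A$ with $[s,B]\cap R=\emptyset$. I would prove two claims.
\begin{itemize}
\item A countable union of Ramsey null sets is Ramsey null. Fuse again: at stage $n$, shrink the tail so that for every $t\subseteq s\cup\{b_0,\dots,b_{n-1}\}$ the neighbourhood $[t,\cdot]$ avoids $R_0,\dots,R_n$.
\item $R$ is completely Ramsey iff $R$ has the Baire property in the Ellentuck topology, meaning $R=U\triangle M$ with $U$ Ellentuck-open and $M$ Ellentuck-meager. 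Step 1 generalizes verbatim from metrically open sets to Ellentuck-open sets. Using that, one checks that Ellentuck-nowhere-dense sets are Ramsey null, and then, by the first claim, that Ellentuck-meager sets are Ramsey null.
\end{itemize}
Since sets with the Baire property form a $\sigma$-algebra in any topological space, the completely Ramsey sets form a $\sigma$-algebra containing the open sets, hence all Borel sets. The delicate step is the second claim, proving meager implies Ramsey null. I would first try to prove directly that an Ellentuck-nowhere-dense set $N$ is Ramsey null: apply Step 1 to the Ellentuck-open set given by the interior of the complement of the closure of $N$. The homogeneous side $[s,B]$ cannot be disjoint from this open set, since that would put $[s,B]$ inside the closure of $N$ and give $N$ nonempty interior. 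So $[s,B]$ lies inside it and misses $N$.
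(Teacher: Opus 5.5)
Your proposal is correct. Its first paragraph is exactly what the paper does. The paper never proves this theorem: it cites Galvin--Prikry for $Z=\N$ and notes in a footnote that the stated form follows by relabeling $\N$ as $Z$ through $n\mapsto z_n$. Everything after your reduction is a self-contained proof of the cited result along Ellentuck's lines, which the paper simply uses as a black box.

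I checked the pieces and they go through.
\begin{itemize}
\item Combinatorial forcing handles metrically open sets: diagonalize so that $B$ decides every relevant $t$, use the pigeonhole step to push rejection to all but finitely many one-point extensions, then fuse.
\item The same argument works for Ellentuck-open $R$. The only change is that the accepting witness for the offending initial segment $t$ of $X$ is the infinite set $X\setminus t\subseteq C$, rather than $C$ itself.
\item Homogenizing the complement of $\overline{N}$ shows that Ellentuck-nowhere-dense sets are Ramsey null.
\item Your fusion shows that countable unions of Ramsey null sets are Ramsey null.
\end{itemize}

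Three small things to tidy:
\begin{itemize}
\item In the nowhere-dense step, the contradiction is that $\overline{N}$ (not $N$) would have nonempty interior.
\item Write out the final assembly. Given $R=U\triangle M$ and $[s,A]$, first shrink to $A'$ with $[s,A']\cap M=\emptyset$. Then homogenize $U$ on $[s,A']$, where $R$ and $U$ agree.
\item You only need the direction ``Ellentuck--Baire property implies completely Ramsey.'' Add the observation that each metric basic set $\{X : t \text{ is an initial segment of } X\}=[t,\N_{>\max t}]$ is Ellentuck-open. The full equivalence, and hence closure of the completely Ramsey sets under countable unions as such, is not needed.
\end{itemize}

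As for what each route buys: the citation keeps the paper short. Your route proves the stronger hereditary statement that every Borel set is completely Ramsey. That form would let the canonicalization lemma homogenize the Borel set $J^{-1}(I)\subseteq[\N]^\omega$ directly on the neighbourhood $[S_j,Q_{j,c-1}]$, which is a legitimate Ellentuck neighbourhood since $\min Q_{j,c-1}>x_n=\max S_j$. This would bypass the pull-back through $\varphi(Y)=S_j\cup Y$ in the paper's Claim.
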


We now show the following \textit{canonicalization} lemma, which uses the Galvin-Prikry theorem above, and shows that Borel coloring functions, when restricted to particular languages in $[\N]^\omega$, behave like finite-prefix coloring functions.

\begin{lemma}[Canonicalization Lemma]
    \label{lemma:canonicalization}
    Let $\mcC$ be the collection of all infinite subsets of $\N$, and let $\{c^J_L\}_{L \in \mcC}$ be a family of Borel coloring functions specified by a Borel map $J$ over a finite palette $\palette$. There exists an infinite set $X=\{x_0 < x_1 < \dots\} \in [\N]^\omega$ and a finite-prefix function $f$ that maps finite, non-empty subsets of $X$ to the palette $\palette$, such that for every finite, non-empty $S \subseteq X$, where $S=\{x_{i_0} < \dots < x_{i_{m}}\}$, and every infinite tail $T \in [X_{> x_{i_{m}}}]^\omega$, it holds that
    \begin{align*}
        J(S \cup T)_{m} = f(S). %
    \end{align*}
\end{lemma}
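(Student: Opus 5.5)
The plan is a fusion-sequence (diagonal) construction. We build $x_0<x_1<\dots$ together with a decreasing sequence of infinite reservoirs $Z_0\supseteq Z_1\supseteq\dots$, and we homogenize the color of the relevant coordinate, one finite set at a time, using \Cref{thm:galvin-prikry}.

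The basic tool is the following. For a finite set $S=\{s_0<\dots<s_m\}$ and a color $c\in\palette$, consider the set
\[R_{S,c}=\{T\in[\N_{>\max S}]^\omega: J(S\cup T)_m=c\}.\]
We claim this set is Borel. The map $T\mapsto S\cup T$ is continuous from $[\N_{>\max S}]^\omega$ into $[\N]^\omega$, because finite prefixes of $S\cup T$ are determined by finite prefixes of $T$. The set $\{Z\in\mcZ_\palette: Z_m=c\}$ is clopen. Hence $R_{S,c}$ is the preimage of a Borel set under the composition of a continuous map and a Borel map, which is Borel.

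Given any infinite $W\subseteq\N_{>\max S}$, we apply \Cref{thm:galvin-prikry} to $R_{S,1}\cap[W]^\omega$. This gives $W_1\subseteq W$ such that $J(S\cup T)_m$ is either $1$ for all $T\in[W_1]^\omega$, or $\neq 1$ for all such $T$. In the second case we apply the theorem again inside $W_1$ with color $2$, and continue. After at most $k$ applications we obtain an infinite $W'\subseteq W$ and a single color $f(S)$ with $J(S\cup T)_m=f(S)$ for all $T\in[W']^\omega$. Call this step "homogenizing $S$ inside $W$."

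The recursion runs as follows. Let $Z_{-1}=\N$. At stage $n$, the elements $x_0<\dots<x_{n-1}$ and an infinite reservoir $Z_{n-1}$ with all elements above $x_{n-1}$ are already defined. Set $x_n=\min Z_{n-1}$ and $W=Z_{n-1}\setminus\{x_n\}$. There are $2^n$ finite sets $S\subseteq\{x_0,\dots,x_n\}$ with $\max S=x_n$. We homogenize them one after another, each time shrinking the current reservoir; call the final reservoir $Z_n$.

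The ordering of these steps matters. When a set $S$ with $\max S=x_n$ is homogenized at stage $n$, it is homogenized inside a reservoir that already contains all later $x_j$. Since later reservoirs only shrink, any tail $T\in[X_{>x_n}]^\omega$ is an infinite subset of $Z_n\subseteq W'$. Therefore $J(S\cup T)_m=f(S)$, where $m=|S|-1$ is exactly the index of $\max S$ in $S\cup T$. Every nonempty finite $S\subseteq X$ has a maximum $x_n$ and is handled at stage $n$. This gives the statement.

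The main obstacle is the recursion bookkeeping: each finite $S$ must be homogenized relative to a reservoir that contains all future elements of $X$. This is secured by homogenizing, at stage $n$, only the sets whose maximum is $x_n$, and by keeping the reservoirs nested.

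The Borel measurability of $R_{S,c}$ is the one technical point that needs care. It relies on the continuity of $T\mapsto S\cup T$, which holds because prefixes of $S\cup T$ depend only on $S$ and on prefixes of $T$. No diagonal intersection is needed, since the homogenization at each stage involves only finitely many sets and each uses finitely many Galvin--Prikry applications.
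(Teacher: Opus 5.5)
Your proposal is correct and follows essentially the same construction as the paper's proof. At stage $n$ you take $x_n$ to be the minimum of the current reservoir, and you homogenize the $2^n$ subsets whose maximum is $x_n$ one at a time, each by at most $k$ Galvin--Prikry applications inside nested reservoirs. You also justify Borelness of the relevant sets through continuity of $T \mapsto S \cup T$ composed with $J$, exactly as the paper does in its Claim.
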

In words, the lemma above identifies a special subsequence in $[\N]^\omega$, such that, if we take take any finite subset $S$ of the subsequence, and then append it with an infinite tail $T$ above $S$, then the color assigned by the Borel coloring function $J$ (which acts on all of $S \cup T$) to the largest member of $S$ is independent of the tail $T$, and is specified completely by a finite-prefix function acting only on the set $S$.
\begin{proof}
    We will specify the elements of the set $X$ inductively. At step $n$ of the process, suppose we have constructed $X_n=\{x_0 < x_1 < \dots < x_{n-1}\}$ and a ``tail reservoir'' $Z_n \in [\N_{> x_{n-1}}]^\omega$, such that for every non-empty subset $S=\{x_{i_0} < \dots < x_{i_{m}}\} \subseteq X_n$, $f(S)$ has been defined, and satisfies that, for every infinite tail $T \in [Z_n]^\omega$,
    \begin{align*}
        J(S \cup T)_{m} = f(S).
    \end{align*}
    For the base case $n=0$, we simply have that $X_0=\{\}$, $Z_0=\N$, and the claim holds vacuously, since there are no non-empty subsets of $X_0$.

    Now suppose that we have constructed $X_n$, $Z_n$, and that $f(S)$ has been defined for every non-empty subset $S \subseteq X_n$. We will set $x_n=\min(Z_n)$, so that $X_{n+1}=\{x_0 < \dots < x_{n-1} < x_n\}$. We must now define the tail reservoir $Z_{n+1}$, and also $f(S)$ for every non-empty subset $S \subseteq\{x_0<\dots<x_n\}$ that contains $x_n$, and thereafter verify that the required property is satisfied with respect to the updated tail reservoir $Z_{n+1}$.

    Let us enumerate all the finitely many non-empty subsets of $X_{n+1}$ that contain $x_n$ as $S_1,\dots,S_\ell$ (where $\ell=2^{n}$), and let $|S_j|=m_j+1$ (recall: we use 0-indexing). We will process each $S_j$ in order; furthermore, for each $S_j$, we will consider each color $c \in \{1,2,\dots,k\}$ in order. As we run through these, we will maintain a series of tail reservoirs 
    \begin{align*}
        Q_{1,0},\dots,Q_{1,k-1},Q_{2,0},\dots,Q_{2,k-1},\dots,Q_{\ell, 0},\dots,Q_{\ell, k-1}.
    \end{align*}
    These tail reservoirs will have the property that $Q_{j,c+1} \subseteq Q_{j,c}$, and $Q_{j+1,0} \subseteq Q_{j,k-1}$.

    To begin with, let $Q_{0,k-1}=Z_{n, > x_n}$. We will now process $S_1,\dots,S_{\ell}$ in order. $(\star)$ Suppose we begin to process $S_j$. We initialize $Q_{j,0}=Q_{j-1, k-1}$. Now, $(\star \star)$ suppose we are processing color $c \in \{1,\dots,k\}$ for $S_j$. Define the following subset of $[Q_{j,c-1}]^\omega$:
    \begin{align}
        \label{eqn:borel-subset-for-galvin-prikry}
        R_{c,j} := \left\{T \in [Q_{j,c-1}]^\omega ~\Big |~ J(S_j \cup T)_{m_j}=c\right\}.
    \end{align}
    \Cref{claim:borel-subset-for-galvin-prikry} at the end of the proof shows that $R_{c,j}$ is a Borel subset of $[Q_{j,c-1}]^\omega$. By the Galvin-Prikry theorem (\Cref{thm:galvin-prikry}), there exists $Y \in [Q_{j,c-1}]^\omega$ such that either $[Y]^\omega \subseteq R_{c,j}$ or $[Y]^\omega \cap R_{c,j} = \emptyset$. 
    
    In the former case, we define $f(S_j)=c$, set $Q_{j,c}=Q_{j,c+1}=\dots=Q_{j,k-1}=Y$, and go back to $(\star)$ to process the next set $S_{j+1}$. Note that this ensures: for every $T \in [Q_{j,k-1}]^\omega$, $J(S_j \cup T)_{m_j}=c$.

    In the latter case, we set $Q_{j,c}=Y$; we remain at the set $S_j$, but go back to $(\star \star)$ to process the next color $c+1$. Importantly, note that in this case, we have reduced a color: namely, every $T \in [Q_{j,c}]^\omega$ satisfies that $J(S_j \cup T)_{m_j} \neq c$; so, if we have processed colors $1,\dots,c$ while remaining in this case, we have that $J(S_j \cup T)_{m_j}$ is restricted to exactly be one of the colors $\{c+1,\dots,k\}$. This means that if we continue to be in this case for each of $c=1,\dots,k-1$, when we begin processing $c=k$, we have the property that every $T \in [Q_{j,k-1}]^\omega$ satisfies the property that $J(S_j \cup T)_{m_j}=k$. We then set $f(S_j)=k$, and go back to $(\star)$ to process the next set $S_{j+1}$.

    After we have processed each of $S_1,\dots,S_\ell$, we will have defined $f(S)$ for every non-empty finite subset of $X_{n+1}$. To finish up, we set $Z_{n+1}=Q_{\ell, k-1}$.

    To complete the inductive step, it remains to argue that for every non-empty, finite subset $S \subseteq \{x_0,\dots,x_n\}$ where $|S|=m+1$, and for every tail $T \in [Z_{n+1}]^\omega$, $J(S \cup T)_m = f(S)$. If $S \subseteq \{x_0,\dots,x_{n-1}\}$, this holds simply by the inductive hypothesis, and the observation that $Z_{n+1} \subseteq Z_n$. Otherwise, $x_n \in S$; in this case, $S=S_j$ for some $S_j$ above. Then, observe that after we process $S_j$, $f(S_j)$ is set to be equal to precisely that $c$ value for which it holds that $J(S_j \cup T)_{m}=c$ for every $T \in [Q_{j,k-1}]^\omega$; since $Z_{n+1} \subseteq Q_{j, k-1}$, this property also holds for every $T \in [Z_{n+1}]^\omega$. This completes the inductive construction of the elements in $X$, and also defines the function $f$ at every finite, non-empty subset of $X$.

    Finally, we argue that for every finite, non-empty $S \subseteq X$, where $S=\{x_{i_0} < \dots < x_{i_{m}}\}$, and every $T \in [X_{> x_{i_{m}}}]^\omega$, it holds that $J(S \cup T)_{m}=f(S)$. To see this, let $i_{m}=n$. Then, at the time $x_n$ was added to $X$ in the inductive construction above, we ensured that for every finite, non-empty subset $S$ of $\{x_0,\dots,x_n\}$, it holds that $J(S \cup T)_{|S|-1}=f(S)$ for every $T \in [Z_{n+1}]^\omega$. In particular, this holds for $S=\{x_{i_0} < \dots < x_{i_{m}}\}$. The desired conclusion then follows by noting that $X_{> x_{i_{m}}}=X_{> x_n} \subseteq Z_{n+1}$.

    We now prove the promised claim that the sets $R_{c,j}$ defined in \eqref{eqn:borel-subset-for-galvin-prikry} are Borel.
    \begin{claim}
        \label{claim:borel-subset-for-galvin-prikry}
        The set $R_{c,j}$ defined in \eqref{eqn:borel-subset-for-galvin-prikry} is a Borel subset of $[Q_{j, c-1}]^\omega$.
    \end{claim}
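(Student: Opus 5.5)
We will write $R_{c,j}$ as the preimage of a Borel set under a Borel map, by composing $J$ with two maps that are clearly Borel (indeed continuous). Set $Q := Q_{j,c-1}$, recall $S_j$ is a fixed finite set with $|S_j| = m_j+1$, and $Q \subseteq \N_{> x_n} = \N_{>\max(S_j)}$. Define the map $\phi: [Q]^\omega \to [\N]^\omega$ by $\phi(T) = S_j \cup T$, and the coordinate projection $\pi_{m_j}: \mcZ_\palette \to \palette$ by $\pi_{m_j}(z) = z_{m_j}$. Then
\begin{align*}
    R_{c,j} = \phi^{-1}\left(J^{-1}\left(\pi_{m_j}^{-1}(\{c\})\right)\right).
\end{align*}

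First, $Y_c := \pi_{m_j}^{-1}(\{c\}) = \{z \in \mcZ_\palette : z_{m_j} = c\}$ is a finite union of basic open sets $\mcO_p$ over prefixes $p$ of length $m_j+1$ with $p_{m_j}=c$, hence open, hence Borel in $\mcZ_\palette$. Since $J$ is a Borel map, $J^{-1}(Y_c)$ is a Borel subset of $[\N]^\omega$.

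Next, we check that $\phi$ is continuous, and hence that preimages of Borel sets under $\phi$ are Borel. (The class of sets whose $\phi$-preimage is Borel is a $\sigma$-algebra containing the open sets, because preimages commute with complements and countable unions.) Write $T=\{t_0<t_1<\dots\}$ and $S_j = \{s_0<\dots<s_{m_j}\}$. Because every element of $T$ exceeds $\max(S_j)$, the increasing enumeration of $\phi(T)$ is $s_0,\dots,s_{m_j},t_0,t_1,\dots$. So the preimage under $\phi$ of a basic open set with prefix $p_0<\dots<p_{r-1}$ has one of two forms. It is empty if $p$ is inconsistent with $S_j$. Otherwise it is all of $[Q]^\omega$ (when $r \le m_j+1$), or the basic open set of $[Q]^\omega$ with prefix $p_{m_j+1},\dots,p_{r-1}$, intersected appropriately with $[Q]^\omega$. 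Either way it is open in $[Q]^\omega$. Arbitrary open sets are unions of basic ones, and there are only countably many basic ones, so $\phi$ is continuous.

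The only subtle point is the topology on $[Q]^\omega$. The Galvin–Prikry theorem (\Cref{thm:galvin-prikry}) is applied with $Z=Q$, using the topology defined in \eqref{eqn:open-sets-languages} with $S=Q$. Its basic open sets are exactly the traces on $[Q]^\omega$ of the basic open sets of $[\N]^\omega$ whose prefixes lie in $Q$. So we must confirm that the preimages computed above are open in this intrinsic topology, and not merely relatively open in some other sense. This is immediate because a prefix with entries outside $Q$ gives the empty set. Combining the steps, $R_{c,j}$ is a Borel subset of $[Q_{j,c-1}]^\omega$.
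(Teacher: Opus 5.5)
Your proposal is correct and follows essentially the same route as the paper: the open cylinder $\{C : C_{m_j} = c\}$ is pulled back through the Borel map $J$, and then through the continuous map $T \mapsto S_j \cup T$ on $[Q_{j,c-1}]^\omega$. Your version also spells out two points the paper leaves implicit: why continuous preimages of Borel sets are Borel, and why the preimages are open in the intrinsic topology on $[Q_{j,c-1}]^\omega$. One small slip: you announce ``one of two forms'' but then list three (empty, all of $[Q]^\omega$, a basic open set); this does not affect the argument.
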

    \begin{proof}
        Recall that we defined $R_{c,j}$ while processing the finite set $S_j$, where $|S_j|=m_j+1$. Consider the following subset of $\mcZ_\palette$:
        \begin{align*}
            I = \{C \in \mcZ_\palette ~\Big |~ C_{m_j}=c\}.
        \end{align*}
        Note that $I$ is a finite union of open sets of the form in \eqref{eqn:open-set-form}, and is hence an open set in the product topology over $\mcZ_\palette$. Since $J:[\N]^\omega \to \mcZ_\palette$ is a Borel map, we have that $J^{-1}(I)$ is a Borel subset of $[\N]^\omega$. That is,
        \begin{align*}
            A = J^{-1}(I)= \{A \in [\N]^\omega ~\Big |~ J(A)_{m_j}=c\}
        \end{align*}
        is a Borel subset of $[\N]^\omega$. Now consider the map
        \begin{align*}
            \varphi:[Q_{j,c-1}]^\omega \to [\N]^\omega, \qquad \varphi(Y)=S_j \cup Y.
        \end{align*}
        Note that $\varphi$ is continuous --- for any basic open set $\mcO_p=\{X \in [\N]^\omega: X_0=p_0,\dots,X_{n-1}=p_{n-1}\}$ corresponding to the finite prefix $p=\{p_0<\dots<p_{n-1}\}$, we have that $\varphi^{-1}(\mcO_p)$ is either equal to $\emptyset$, $[Q_{j,c-1}]^\omega$ or a basic open set in $[Q_{j,c-1}]^\omega$, depending on the consistency of the prefix $p$ with $S_j$. Finally, $\varphi$ being continuous means that
        \begin{align*}
            \varphi^{-1}(A) = \left\{T \in [Q_{j,c-1}]^\omega ~\Big |~ J(S_j \cup T)_{m_j}=c\right\}=R_{c,j}
        \end{align*}
        is a Borel subset of $[Q_{j,c-1}]^\omega$ as claimed.
    \end{proof}
    This concludes the proof of \Cref{lemma:canonicalization}.
\end{proof}

Using the canonicalization lemma above, we now show that the class of Borel coloring functions do not suffice for identification with a finite palette.

\begin{theorem}
    \label{thm:borel-lb-identification}
    Let $\mcC$ be the collection of all infinite subsets of $\N$, and let $\{c^J_L\}_{L \in \mcC}$ be any universal family of Borel coloring functions specified by the Borel map $J$, and mapping to a finite palette $\palette$. There exists a countable collection $\mcC'$ such that $\mcC'$ is not identifiable in the limit, even with terminal colors given by $\{c^J_L\}_{L \in \mcC'}$.
\end{theorem}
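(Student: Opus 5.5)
Apply \Cref{lemma:canonicalization} to $J$ to obtain $X=\{x_0<x_1<\dots\}$ and a function $f$ on finite non-empty subsets of $X$, with $J(S\cup T)_m=f(S)$ whenever $|S|=m+1$ and $T\in[X_{>\max S}]^\omega$. Let $\sigma:\N\to X$, $\sigma(n)=x_n$, be the order isomorphism, and define the pulled-back finite-prefix function $\tilde f(F):=f(\sigma(F))$ on finite non-empty $F\subseteq\N$ (set $\tilde f(\emptyset)$ arbitrarily). Then $\tilde f$ takes values in the finite palette $\palette$. Apply \Cref{lemma:induction-coherence-2} to $\tilde f$. Since every finite set is $k$-coherent and no set is $0$-coherent, iterating the lemma at most $k$ times must land in alternative (1). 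This yields infinite $B,\{A_i\}_{i\in\N}\subseteq\N$ such that for every finite non-empty $T\subseteq B$ there is $i$ with $T\subseteq A_i\subsetneq B$ and $\tilde f(A_{i,\le y})=\tilde f(B_{\le y})$ for all $y\in A_i$. Transport these sets to $X$ by setting $B^*=\sigma(B)$, $A_i^*=\sigma(A_i)$, and take $\mcC'=\{B^*\}\cup\{A_i^*:i\in\N\}$, which is countable.

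Next, verify that $\mcC'$ violates the condition of \Cref{thm:characterization-identification-with-terminal-traces} at the language $B^*$. Fix any finite $T^*\subseteq B^*$. If $T^*=\emptyset$, replace it by any singleton in $B^*$. Let $T=\sigma^{-1}(T^*)$ and pick $i$ with $T\subseteq A_i\subsetneq B$. Then $T^*\subseteq A_i^*\subsetneq B^*$, so it remains to check that $c^J_{A_i^*}(x)=c^J_{B^*}(x)$ for every $x\in A_i^*$.

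For such $x$, write $A_i^*=S\cup T'$ with $S=(A_i^*)_{\le x}$ and $T'=(A_i^*)_{>x}$, where $|S|=m+1$. Because $A_i^*$ is an infinite subset of $X$, the tail $T'$ lies in $[X_{>x}]^\omega$. Since $x$ is the $m$-th element of $A_i^*$, the canonicalization lemma gives
\[
c^J_{A_i^*}(x)=J(A_i^*)_m=f(S)=\tilde f(A_{i,\le \sigma^{-1}(x)}).
\]
By the same reasoning, with $x$ the $m'$-th element of $B^*$ where $m'+1=|(B^*)_{\le x}|$,
\[
c^J_{B^*}(x)=\tilde f(B_{\le\sigma^{-1}(x)}).
\]
These two values agree by alternative (1). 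Hence every candidate tell-tale for $B^*$ is defeated by some proper sublanguage in $\mcC'$ with identical colors. By \Cref{thm:characterization-identification-with-terminal-traces}, $\mcC'$ is not identifiable in the limit with these terminal colors.

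The main subtlety is an index mismatch. \Cref{lemma:canonicalization} fixes the color of the $m$-th element, and that index $m$ depends on the language: $x$ may sit at different positions in $A_i^*$ and in $B^*$. The argument must therefore apply the lemma to each language separately, with its own prefix $S$ and its own index. This works because the lemma quantifies over all finite $S\subseteq X$ and all infinite tails inside $X$, and because every language in $\mcC'$ is an infinite subset of $X$. That containment is exactly why the collection is built inside $X$ via $\sigma$. A smaller point is the empty tell-tale, handled by the singleton replacement above, since the lemma's alternative (1) only covers non-empty $T$.
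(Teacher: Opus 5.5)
Your proof is correct and follows the paper's approach. Both arguments canonicalize $J$ via \Cref{lemma:canonicalization}, pull $f$ back along the order isomorphism $\N\to X$ to get a finite-prefix function, and transport the finite-prefix lower bound through that isomorphism using $J(\sigma(L))_i=\tilde f(\{y_0,\dots,y_i\})$ for $L=\{y_0<y_1<\cdots\}$. The only difference is presentational: the paper argues by contradiction, transferring tell-tales from arbitrary collections to contradict \Cref{thm:finite-prefix-lb-identification}, whereas you push forward the explicit witness $B,\{A_i\}$ from \Cref{lemma:induction-coherence-2} and check the failure of the characterization at $\sigma(B)$ directly.
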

\begin{proof}
    From \Cref{lemma:canonicalization}, there exists an infinite set $X=\{x_0 < x_1 < \dots\} \in [\N]^\omega$ and a finite-prefix function $f$ that maps finite, non-empty subsets of $X$ to the palette $\palette$, such that for every finite, non-empty $S \subseteq X$, where $S=\{x_{i_0} < \dots < x_{i_{m}}\}$, and every infinite tail $T \in [X_{> x_{i_{m}}}]^\omega$, it holds that
    \begin{align}
        \label{eqn:canonicalization-condition}
        J(S \cup T)_{m} = f(S). %
    \end{align}
    Fix a bijection $\pi:\N \to X$ satisfying that $\pi(n)=x_n$. For any $A \subseteq \N$, let us slightly abuse notation and use the shorthand
    \begin{align*}
        \pi(A) := \{\pi(a):a \in A\} = \{x_{a} : a \in A\} \subseteq X.
    \end{align*}
    Now define the function $g$ that maps finite (ordered) subsets of $\N$ to the palette $\palette$, where
    \begin{align*}
        g(S) = f(\pi(S)).
    \end{align*}
    Consider now the universal family $\{c^g_L\}_{L \in \mcC}$ of finite-prefix coloring functions, specified by the function $g$, which satisfies that, for any language $L=\{y_0 < y_1 < \dots\} \in [\N]^\omega$,
    \begin{align*}
        c^g_L(y_{i}) = g(\{y_0,\dots,y_i\}).
    \end{align*}
    Observe that 
    \begin{align}
        c^g_L(y_i) = g(\{y_0,\dots,y_i\}) &= f(\pi(\{y_0,\dots,y_i\})) \nonumber \\
        &= J(\{\pi(y_0),\dots,\pi(y_i)\} \cup \{\pi(y_{i+1}), \pi(y_{i+2}), \dots\})_{i} \tag{using \eqref{eqn:canonicalization-condition}} \nonumber \\
        &= J(\pi(L))_{i}. \label{eqn:relating-f-and-g}
    \end{align}

    Now assume for the sake of contradiction that for every countable collection $\mcC'$ comprising of languages from $\mcC$, $\mcC'$ is identifiable in the limit with terminal colors given by the Borel coloring functions $\{c^J_L\}_{L \in \mcC'}$. We will then show that for every countable collection $\mcC'$ comprising of languages from $\mcC$, $\mcC'$ is \textit{also} identifiable in the limit with terminal colors given by the finite-prefix coloring functions $\{c^g_L\}_{L \in \mcC'}$, which will contradict \Cref{thm:finite-prefix-lb-identification}.
    
    Towards this, fix any countable subcollection $\mcC' \subseteq \mcC$. %
    We will show that for every $L \in \mcC'$, there exist finite $T_L \subseteq L$ satisfying
    \begin{align}
        \label{eqn:to-show-for-id-with-finite-prefix}
        \forall L' \in \mcC' \text{ if } L' \subsetneq L \text{ then } \left[(T_{L} \nsubseteq L') \; \lor \; (\exists x \in L'  \text{ such that } c^g_{L'}(x) \neq c^g_{L}(x)) \right].
    \end{align}
    From \Cref{thm:characterization-identification-with-terminal-traces}, this implies that $\mcC'$ is identifiable in the limit with terminal colors given by $\{c^g_L\}_{L \in \mcC'}$.

    In order to define the tell-tales $T_L$, let us consider the collection $\mcC'' = \{\pi(L)\}_{L \in \mcC'}$. By assumption, $\mcC''$ is identifiable in the limit with terminal colors given by the Borel map $J$. From \Cref{thm:characterization-identification-with-terminal-traces}, this means that for every $\pi(L) \in \mcC''$, there exist finite $\widetilde{T}_{\pi(L)} \subseteq \pi(L)$ satisfying
    \begin{align}
        \label{eqn:id-that-holds-true-for-borel}
        \forall \pi(L') \in \mcC'' \text{ if } \pi(L') \subsetneq \pi(L) \text{ then } \left[(\widetilde{T}_{\pi(L)} \nsubseteq \pi(L')) \; \lor \; (\exists z \in \pi(L')  \text{ such that } c^J_{\pi(L')}(z) \neq c^J_{\pi(L)}(z)) \right].
    \end{align}
    For any $L \in \mcC'$, we now define $T_L \subseteq L$ as follows:
    \begin{align}
        \label{eqn:def-transferred-tell-tale}
        T_L = \pi^{-1}\left({\widetilde{T}_{\pi(L)}}\right).
    \end{align}
    It remains to show that \eqref{eqn:to-show-for-id-with-finite-prefix} holds true for $T_L$ defined as such. Towards this, consider any $L' \in \mcC'$ that satisfies $L' \subsetneq L$. Now consider the languages $\pi(L), \pi(L') \in \mcC''$. Since $\pi$ is a bijection into $X$, $L' \subsetneq L$ implies that $\pi(L') \subsetneq \pi(L)$. From \eqref{eqn:id-that-holds-true-for-borel}, this further means that 
    \begin{align*}
        (\widetilde{T}_{\pi(L)} \nsubseteq \pi(L')) \; \lor \; (\exists z \in \pi(L')  \text{ such that } c^J_{\pi(L')}(z) \neq c^J_{\pi(L)}(z))
    \end{align*}
    If $\widetilde{T}_{\pi(L)} \nsubseteq \pi(L')$,  we have, using the definition of $T_L$ \eqref{eqn:def-transferred-tell-tale} and the fact that $\pi$ is a bijection, that
    \begin{align*}
        \widetilde{T}_{\pi(L)} \nsubseteq \pi(L') \implies \pi(T_L) \nsubseteq \pi(L') \implies T_L \nsubseteq L'.
    \end{align*}
    On the other hand, suppose $\exists z \in \pi(L')  \text{ such that } c^J_{\pi(L')}(z) \neq c^J_{\pi(L)}(z)$. Suppose that $L'=\{y'_0 < y'_1 < \dots\} \in [\N]^\omega$, so that $\pi(L')=\{\pi(y'_0) < \pi(y'_1) < \dots\}$, and let $z=\pi(y'_i)$. Then, we have that
    \begin{align*}
        c^J_{\pi(L')}(z) &= c^J_{\pi(L')}(\pi(y'_i)) \\
        &= J(\pi(L'))_{i} \\
        &= f(\pi(\{y'_0,\dots,y'_i\})) \tag{using \eqref{eqn:relating-f-and-g}}\\
        &= c^g_{L'}(y'_i). \tag{by definition of $c^g_{L'}$}
    \end{align*}
    Now suppose that $L=\{y_0 < y_1 < \dots\} \in [\N]^\omega$, so that $\pi(L)=\{\pi(y_0) < \pi(y_1) < \dots\}$; since $L' \subsetneq L$, we have that $\pi(L') \subsetneq \pi(L)$, and so, $z \in \pi(L)$. Then, let $z=\pi(y_j)$. Note that $y_j=y'_i=\pi^{-1}(z)$, since $\pi$ is a bijection. By a similar chain of equalities as above, it holds that
    \begin{align*}
        c^J_{\pi(L)}(z) = c^g_L(y_j) = c^g_L(y'_i).
    \end{align*}
    Therefore, since $c^J_{\pi(L')}(z) \neq c^J_{\pi(L)}(z)$, we have that $c^g_{L'}(y'_i) \neq c^g_L(y'_i)$. That is, we have shown the existence of $x \in L'$ such that $c^g_{L'}(x) \neq c^g_L(x)$. Combining both the cases above establishes \eqref{eqn:to-show-for-id-with-finite-prefix} for the tell-tales defined in \eqref{eqn:def-transferred-tell-tale}. This implies that $\mcC'$ is identifiable in the limit with terminal colors given by $\{c^g_L\}_{L \in \mcC'}$. Since $\mcC'$ above was arbitrary, we have shown that for every countable collection $\mcC'$ comprising of languages from $\mcC$, $\mcC'$ is identifiable in the limit with terminal colors given by the finite-prefix coloring functions $\{c^g_L\}_{L \in \mcC'}$. This contradicts \Cref{thm:finite-prefix-lb-identification}, yielding the desired result.
\end{proof}

\begin{remark}[Finite Languages III]
    \label{remark:finite-languages-3}
    Our lower bounds above rule out collection-independent Borel coloring functions and finite-prefix coloring functions that use a finite palette for identification of all countable language collections, where every language is infinite. The lower bound continues to hold when we additionally also include finite languages, since this only makes the problem harder.
\end{remark}

\subsection{Existence of Borel coloring functions that use an infinite palette}
\label{sec:borel-ub}

The previous subsection showed that no family of Borel coloring functions that uses a finite palette can work in a collection-independent manner for identification. We conclude this section by showing that there exists a family of Borel coloring functions that uses an \textit{infinite} palette and suffices for identification collection-independently.

\begin{theorem}
    \label{thm:borel-map-infinite-palette}
    Let $\mcC$ be the collection of all infinite subsets of $\Sigma^*$. There exists a family of Borel coloring functions $\{c^J_{L}\}_{L \in \mcC}$ given by a Borel map $J:\mcC \to \mcZ_\N$ that satisfies the distinguishable coloring condition.
\end{theorem}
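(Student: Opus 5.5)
We will exhibit an explicit Borel map $J:[\N]^\omega \to \mcZ_\N$ whose terminal coloring of each $L=\{x_0<x_1<\dots\}$ records the entire finite prefix. Fix a computable bijection $\langle\cdot\rangle$ from finite subsets of $\N$ to $\N$ (for instance $\langle F\rangle=\sum_{y\in F}2^{y}$), and set
\begin{align*}
J(L)_i = \left\langle \{x_0,\dots,x_i\}\right\rangle \quad\text{equivalently}\quad J(L)_i = \left\langle L\cap\{0,1,\dots,x_i\}\right\rangle .
\end{align*}
So the color of $x_i$ in $L$ is a code for the set of all elements of $L$ up to and including $x_i$. This is the terminal encoding of the trace coloring of \cite{charikar26language}, in the sense that the color of a string determines which strings of $L$ precede it. Only an infinite palette is used, consistent with \Cref{thm:borel-lb-identification}.

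The first step is the distinguishable coloring condition. Let $L'\subsetneq L$ be infinite. Choose $y\in L\setminus L'$. Since $L'$ is infinite, there is $x\in L'$ with $x>y$. Then $L'\cap\{0,\dots,x\}$ omits $y$, while $L\cap\{0,\dots,x\}$ contains $y$. Because $\langle\cdot\rangle$ is injective, $c^J_{L'}(x)\neq c^J_L(x)$. Hence the condition holds for every pair of infinite sets, and so for every countable subcollection simultaneously. Note that here we use the coordinate description of $J(L)_i$, so the color attached to $x\in L$ is $\langle L\cap[0,x]\rangle$.

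The second step is Borelness, and it is the only mildly technical part. We will in fact show that $J$ is continuous. The topology on $\mcZ_\N$ is generated by the basic open sets $\mcO_p$. For each $p=(p_0,\dots,p_{n-1})$, membership $J(L)\in\mcO_p$ depends only on $x_0,\dots,x_{n-1}$, the first $n$ elements of $L$. Therefore $J^{-1}(\mcO_p)$ is a union of basic open sets of the form \eqref{eqn:open-sets-languages}. Concretely, it is the union over those increasing prefixes $q_0<\dots<q_{n-1}$ with $\langle\{q_0,\dots,q_j\}\rangle=p_j$ for all $j<n$. This makes $J^{-1}(\mcO_p)$ open.

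Every open subset of $\mcZ_\N$ is a countable union of basic open sets, because there are countably many prefixes. So preimages of open sets are open. Since the family $\{Y: J^{-1}(Y)\text{ Borel}\}$ is a $\sigma$-algebra containing the open sets, it contains all Borel sets, and $J$ is Borel.

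Finally, if the statement is read over $\Sigma^*$, we transport everything through the fixed bijection $\Sigma^*\cong\N$ from the preliminaries. The obstacle, such as it is, lies in this last generating-family argument rather than in the combinatorics.
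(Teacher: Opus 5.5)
Your proof is correct, but it takes a different route from the paper.

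The paper does not design a new coloring. It takes the next-live trace coloring of \cite{charikar26language}: for each string prefix $y$ of $x$, it records whether $y\in L$ plus the number of letters $a$ for which some $yaq$ lies in $L$. It packs the resulting tuple into one natural number via a bijection and imports the distinguishable coloring condition from \cite{charikar26language}. It then checks Borelness by hand, through the auxiliary maps $F_p$, the selectors $S_i$, and the coordinates $J_i$.

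You instead use the full-history coloring $J(L)_i=\langle L\cap[0,x_i]\rangle$. For this coloring, distinguishability is a self-contained two-line argument. Your map is also continuous rather than merely Borel. The paper's map is not continuous, since its colors depend on whether some extension $yaq$ ever appears in $L$, and no finite prefix of the enumeration decides that. Your map is also a finite-prefix coloring in the sense of the paper's definition. So it shows directly that the finite-palette hypothesis in \Cref{thm:finite-prefix-lb-identification} cannot be dropped.

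What the paper's route buys is a statement about the previously known construction, which uses only $|\Sigma|+2$ colors per symbol: that construction is Borel once terminally encoded, as the abstract and results table claim. Your argument does not give that. Your aside that your coloring ``is the terminal encoding of the trace coloring of \cite{charikar26language}'' is inaccurate. That coloring records membership and live continuations along the string prefixes of $x$, not the set of elements of $L$ below $x$. The aside plays no role in your proof, so the theorem as stated is still established.
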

\begin{proof}
    The coloring functions are based on the trace coloring functions given by \cite{charikar26language}: we show that the natural conversion of these trace coloring functions into terminal coloring functions --- which uses an infinite palette --- satisfies the Borel property, and also satisfies the distinguishable coloring condition.
    
    In more detail, the trace coloring functions of \cite{charikar26language} are defined as follows. 
    For any $L \subseteq \Sigma^*$, define the function $f_L:\Sigma^* \to \{0,1,\dots,|\Sigma|+1\}$ as
    \begin{align}
        \label{eqn:nextlive-trace-coloring-1}
        f_L(y) = \Ind[y \in L] + \sum_{a \in \Sigma} \Ind[\exists q \in \Sigma^* \text{ such that } yaq \in L].
    \end{align}
    Then, the trace coloring of any $x=(s_{0},\dots,s_{n}) \in L$ is the sequence
    \begin{align}
        \label{eqn:next-live-trace-coloring-2}
        (f_L(\eps), f_L(s_{\le 0}), f_L(s_{\le 1}),\dots,f_L(s_{\le n})),
    \end{align}
    where $s_{\le j}=(s_{0},\dots,s_{j})$. We can naturally convert this trace coloring into a terminal coloring by introducing a bijection mapping finite, non-empty sequences over $\{0,1,\dots,|\Sigma|+1\}$ to $\N$. Concretely, fix any bijection $b:\{0,1,\dots,|\Sigma|+1\}^{> 0} \to \N$. Now define the terminal coloring function $c_L$ as
    \begin{align}
        \label{eqn:nextlive-terminal-coloring}
        c_L(x) = b((f_L(\eps), f_L(s_{\le 0}), f_L(s_{\le 1}),\dots,f_L(s_{\le n}))).
    \end{align}
    Note that this terminal coloring function uses an infinite palette. \cite{charikar26language} showed that the trace coloring functions given by \eqref{eqn:nextlive-trace-coloring-1}, \eqref{eqn:next-live-trace-coloring-2} satisfy the distinguishable coloring condition (over the collection $\mcC=[\Sigma^*]^\omega$)\footnote{In fact, \cite{charikar26language} show that distinguishable coloring condition holds for $2^{\Sigma^*}$, which includes finite subsets of $\Sigma^*$ as well.}, where instead of terminal colors, they consider trace colors. Since $b$ induces a bijection between the two, whenever there is a discrepancy between the trace coloring of a string, there is a discrepancy between the terminal coloring as well. Summarily, it holds that the family of terminal colorings given in \eqref{eqn:nextlive-terminal-coloring} satisfies the distinguishable coloring condition.
    
    Now, fix a canonical ordering of all the strings in $\Sigma^*$, and suppose any $L=\{x_0,x_1,\dots\} \in [\Sigma^*]^\omega$ is specified in this ordering. Consider the map $J:[\Sigma^*]^\omega \to \mcZ_\N$ to be defined as
    \begin{align*}
        J(L) = (c_L(x_0), c_L(x_1),\dots).
    \end{align*}
    The following claim shows that this map is Borel, which completes the proof of \Cref{thm:borel-map-infinite-palette}.
    
    \begin{claim}
        \label{claim:nextlive-borel-map}
        The terminal coloring functions defined in \eqref{eqn:nextlive-terminal-coloring} induce a Borel map $J$.
    \end{claim}
    \begin{proof}
        For any finite $p \in \Sigma^*$, define $F_p:[\Sigma^*]^\omega \to \{0,1,\dots,|\Sigma|+1\}$ as %
        \begin{align*}
            F_p(L) = \Ind[p \in L] + \sum_{a \in \Sigma} \Ind[\exists q \in \Sigma^* \text{ such that } paq \in L].
        \end{align*}
        Namely, $F_p$ is defined similarly to $f_L$ in \eqref{eqn:nextlive-trace-coloring-1}, but swaps arguments. We will argue that $F_p$ is a Borel map.
    
        Towards this, note first that the map $L \mapsto \Ind[p \in L]$ is a Borel map. This is because the set $\{L: p \in L\}$, which can be written as a countable union of open sets of the form given in \eqref{eqn:open-sets-languages}, is a Borel subset of $[\Sigma^*]^\omega$.
    
        Next, we claim that for any $a \in \Sigma$, the map $L \mapsto \Ind[\exists q \in \Sigma^* \text{ such that } paq \in L]$ is a Borel map. This follows since
        \begin{align*}
            \{L: \exists q \in \Sigma^* \text{ such that } paq \in L\} = \bigcup_{q \in \Sigma^*} \{L: paq \in L\},
        \end{align*}
        and each $\{L: paq \in L\}$ can be written as a countable union of open sets. Since $\Sigma$ is finite, we get that the map $L \mapsto \sum_{a \in \Sigma}\Ind[\exists q \in \Sigma^* \text{ such that } paq \in L]$ is Borel. In total, we conclude that $F_p$ is Borel.
    
        Now consider the \textit{selector} function $S_i:[\Sigma^*]^\omega \to \Sigma^*$ which selects the $i^\text{th}$ element in the enumeration of any language $L=\{x_0, x_1, \dots\}$, i.e., $S_i(L)=x_i$. We argue that $S_i$ is Borel. Towards this, consider any $\sigma_m$ in the canonical ordering of $\Sigma^*=\{\sigma_0,\sigma_1,\dots\}$. It suffices to show that $S_i^{-1}(\sigma_m)$ is a Borel subset of $[\Sigma^*]$. Since all languages $L$ are enumerated in an order consistent with the canonical ordering of $\Sigma^*$, we have that
        \begin{align*}
            S_i^{-1}(\sigma_m) = \bigcup_{j_0<j_1<\dots<j_{i-1} < m} \left\{L=\{x_0 < x_1 < \dots\} \in [\Sigma^*]^\omega ~\Big |~ x_0 = \sigma_{j_0},\dots,x_{i-1}=\sigma_{j_{i-1}}, x_i = \sigma_m\right\},
        \end{align*}
        which is a finite union of open subsets of $[\Sigma^*]^\omega$, and is hence Borel.
    
        We will now show that the coordinate-wise map $J_i:[\Sigma^*]^\omega \to \N$ where $J_i(L)=J(L)_i$ is a Borel map. In order for this, it suffices to show that $J_i^{-1}(n)$ for any given $n \in \N$ is a Borel set. To this end, recall that $b$ is bijection mapping finite sequences over $\{0,1,\dots,|\Sigma|+1\}$ to $\N$, and consider $b^{-1}(n) = (r_0,\dots,r_{\ell-1}) \in \{0,1,\dots,|\Sigma|+1\}^\ell$. Then, we have that $J_i(L)=n$ if and only if $L=\{x_0 < x_1 < \dots\}$ satisfies that $|x_i|=\ell-1$, $F_{\eps}(L)=r_0$, and for every $j=0,\dots,\ell-2$, $F_{x_{i, \le j}}(L)=r_{j+1}$. Namely,
        \begin{align*}
            J_i^{-1}(n) = \bigcup_{s \in \Sigma^{\ell-1}} \left( S^{-1}_i(s) \cap F^{-1}_{\eps}(r_0) \, \cap \, \bigcap_{j=0}^{\ell-2} F^{-1}_{s_{\le j}}(r_{j+1})\right).
        \end{align*}
        The expression inside the (finite) union is a finite intersection of Borel sets (and is hence a Borel set), since we argued above that both $S_i$ and $F_p$ are Borel maps. We conclude that $J^{-1}_i(n)$ is a Borel subset of $[\Sigma^*]^\omega$; since $n$ was arbitrary, this implies that $J_i$ is a Borel map. Finally, since $\mcZ_\N$ has the product Borel structure over $\N$, this suffices to show that $J:[\Sigma^*]^\omega \to \mcZ_\N$ is Borel, as desired.
    \end{proof}
\end{proof}

\paragraph{AI Disclosure} ChatGPT was routinely used to help in developing and understanding the technical ideas used in the paper. In particular, it pointed us to the connection with the Galvin-Prikry theorem used in our lower bounds.

\subsection*{Acknowledgements}
This work was supported by Moses Charikar’s and Gregory Valiant’s Simons Investigator Awards, a Google PhD Fellowship, a Simons Collaboration grant and a grant from the MacArthur Foundation.

\bibliographystyle{alpha} 
\bibliography{references}

\appendix

\end{document}